\title{Online Parameter-Free Learning of Multiple Low Variance Tasks}
\author{~~~ Giulia Denevi$^{1}$ ~~~~~~~~~~~~~~~~~ Dimitris Stamos$^{2}$ ~~~~~~~~~~~~~~~~~ Massimiliano Pontil $^{1,2}$ ~~~ \\ 
{\small \hspace*{-2.0em} ~~~ giulia.denevi@iit.it ~~~~~~~~~~~~~~~~~ d.stamos.12@ucl.ac.uk ~~~~~~~~~~~~~~~~~ massimiliano.pontil@iit.it} \vspace{.1cm} \\
\small{$^1$ Computational Statistics and Machine Learning, Istituto Italiano di Tecnologia, 16163 Genova, Italy} \\
\small{$^2$ Computer Science Department, University College of London, WC1E 6BT, London, United Kingdom}}
\newcommand{\X}{\mathcal{X}}
\newcommand{\Y}{\mathcal{Y}}
\newcommand{\E}{\mathcal{E}}
\newcommand{\U}{\mathcal{U}}
\newcommand{\EE}{\mathbb{E}}
\newcommand{\B}{\mathcal{B}}
\newcommand{\proj}{\ensuremath{\text{\rm proj}}}
\newcommand{\argmin}{\operatornamewithlimits{argmin}}
\newcommand{\wmu}{{w_\mu}}
\newcommand{\ee}{{\mathcal{E}}}
\newcommand{\rx}{{R}}
\newcommand{\task}{\mu}
\newcommand{\env}{\rho}
\newcommand{\Zn}{Z}
\newcommand{\data}{Z}
\newcommand{\cR}{\mathcal{R}}
\newcommand{\Real}{\mathbb{R}}
\newcommand{\Exp}{\mathbb{E}}
\newcommand{\vertiii}[1]{{\left\vert\kern-0.25ex\left\vert\kern-0.25ex\left\vert #1 
    \right\vert\kern-0.25ex\right\vert\kern-0.25ex\right\vert}}
\crefname{assumption}{Asm.}{Asm.}
\crefname{equation}{}{}
\crefname{equation}{Eq.}{Eqs.}
\crefname{figure}{Fig.}{Fig.}
\crefname{table}{Tab.}{Tab.}
\crefname{section}{Sec.}{Sec.}
\crefname{theorem}{Thm.}{Thm.}
\crefname{proposition}{Prop.}{Prop.}
\crefname{fact}{Fact}{Facts}
\crefname{lemma}{Lemma}{Lemmas}
\crefname{corollary}{Cor.}{Cor.}
\crefname{example}{Ex.}{Ex.}
\crefname{remark}{Rem.}{Rem.}
\crefname{algorithm}{Alg.}{Algorithms}
\crefname{appendix}{App.}{App.}
\crefname{algorithm}{Alg.}{Alg.}
\declaretheorem[name=Theorem,refname=Thm.]{theorem}
\declaretheorem[name=Proposition,refname=Prop.,sibling=theorem]{proposition}
\declaretheorem[name=Corollary,refname=Cor.,sibling=theorem]{corollary}
\declaretheorem[name=Assumption,refname=Asm.]{assumption}
\renewenvironment{proof}[1][\proofname]{\par
  \pushQED{\qed}
  \normalfont \topsep6\p@\@plus6\p@\relax
  \trivlist
  \item[\hskip\labelsep
        \bfseries
    #1\@addpunct{.}]\ignorespaces
}{
  \popQED\endtrivlist\@endpefalse
}
\def\eop{$\rule{1.3ex}{1.3ex}$}
\renewcommand\qedsymbol\eop
\newenvironment{proofsketch}[1][{\bf Proof Sketch.}]{\par
  \pushQED{\qed}
  \normalfont \topsep6\p@\@plus6\p@\relax
  \trivlist
  \item[\hskip\labelsep
        \bfseries
    #1\@addpunct{}]\ignorespaces
}{
  \popQED\endtrivlist\@endpefalse
}
\def\eop{$\rule{1.3ex}{1.3ex}$}
\renewcommand\qedsymbol\eop
\newenvironment{proofGD}[1][\proofname]{\par
  \pushQED{\qed}%
  \normalfont \topsep6\p@\@plus6\p@\relax
  \trivlist
  \item[\hskip\labelsep
        \bfseries
    #1%\@addpunct{.}
    ]
\ignorespaces
}{%
  \popQED\endtrivlist\@endpefalse
}
\def\eop{$\rule{1.3ex}{1.3ex}$}
\renewcommand\qedsymbol\eop
\begin{document}

\maketitle

\begin{abstract}
We propose a method to learn a common bias vector for a growing sequence of low-variance tasks. Unlike state-of-the-art approaches, our method does not require tuning any hyper-parameter. Our approach is presented in the non-statistical setting and can be of two variants. The ``aggressive'' one updates the bias after each datapoint, the ``lazy'' 
one updates the bias only at the end of each task. We derive an across-tasks regret bound for the method. When compared to state-of-the-art approaches, the aggressive variant returns faster rates, the lazy one recovers standard rates, but with no need of tuning hyper-parameters. We then adapt the methods to the statistical setting: the aggressive variant becomes a multi-task learning method, the lazy one a meta-learning method. 
Experiments confirm the effectiveness of our methods in practice.
\end{abstract}

%--------------------------------------------------------------------------------------------------------

\section{INTRODUCTION}
\label{introduction}

A long standing problem in machine learning is to develop algorithms that can learn effectively on the basis of only few training examples. To this end, a basic principle that  has proven fruitful is to leverage similarities among a set of tasks in order to facilitate their learning process by their corresponding training samples. This basic principle has been studied both from a multi-task learning (MTL) and a meta-learning or learning-to-learn (LTL) perspective. In the first case we wish to perform well on the same tasks used during training, in the second case we aim to extract ``knowledge'' from the observed tasks that would be useful for solving \emph{new} (possible yet unseen) similar tasks. We refer to \cite{baxter2000model,caruana1998multitask,maurer2005algorithmic,maurer2016benefit,thrun2012learning} and references therein for a detailed discussion on these frameworks.

Both multi-task learning and meta-learning were originally investigated in 
the setting in which the tasks' data are assumed to be independently sampled 
from an underlying probability distribution and they are processed in one entire batch,
 see for instance ~\cite{baxter2000model,evgeniou2005learning,maurer2013sparse,maurer2016benefit,pentina2014pac}. Quite recently, significant progress has been made towards the design
of more efficient algorithms in which the data are sequentially processed and may even be adversarially generated, see \cite{alquier2016regret,balcan2019provable,cavallanti2010linear,denevi2019learning,denevi2018incremental, denevi2018learning,finn2019online,pentina2016lifelong}. 
In this work, we focus on the 
so-called Online-Within-Online (OWO) setting, in which both the tasks and their 
samples are observed sequentially.

Nevertheless, the existing multi-task learning or meta-learning methods in the literature require tuning hyper-parameters and their proper choice is necessary in order to demonstrate the advantage of such methods over the baseline algorithm learning  
the tasks independently. Typically, in practice, this bottleneck is addressed either by an expensive validation procedure in the statistical setting, or by the so-called doubling trick procedures in the adversarial setting.
In this work, we wish to design OWO parameter-free methods that are well suited to 
address an increasing sequence of low-variance tasks. To this end we consider a within-task variant of the online parameter-free algorithm by \cite{cutkosky2018black}, in which the iterates are translated by a common bias vector. The main goal of this work 
is to design and analyze a parameter-free procedure to learn a good bias directly from a 
sequence of observed tasks. 

{\bf Contributions.} 
We first show that, similarly to what already observed for other
families of algorithms requiring tuning of hyper-parameters
\cite{balcan2019provable,denevi2019learning,denevi2019online},
also for the parameter-free family considered here, setting the ``right' 
bias can be advantageous with respect to (w.r.t.) learning the tasks 
independently by the unbiased algorithm, when the variance of the target 
tasks' vectors is sufficiently small.
After this, the main contribution of this work is to develop a 
parameter-free method aiming at inferring a good bias from a
sequence of tasks in the OWO framework. 
The method is originally presented in a non-statistical setting
and it is able to incrementally process a growing sequence of 
tasks. Our method can be of two variants: an 
``aggressive'' one in which the bias 
vector is updated after each point, and a ``lazy'' one, in which the 
bias' update is performed only at the end of each task training sequence. 
We then derive an across-tasks regret bound for the proposed 
method. In the aggressive case the bound enjoys faster rates w.r.t. the 
state-of-the-art approaches for growing tasks' sequences, while, in the 
lazy case we recover standard rates, but with no need of hyper-parameters' tuning.
Next, we show that both methods and the corresponding bounds can be 
adapted to the statistical setting. Specifically, the aggressive variant 
can be converted into a multi-task learning method, whereas the lazy 
variant can be translated into a meta-learning method, 
generalizing also to new tasks. 
Finally, we test the performance of our methods in numerical experiments. 

{\bf Paper Organization.} We start from describing our setting and recalling some basics on parameter-free online learning that will be employed throughout this work in \cref{setting} and \cref{preliminaries}, respectively. In \cref{within_task_family} we introduce the biased family of within-task algorithms our method is based on. 
After this, in \cref{motivation_bias}, we justify our choice, characterizing the settings in which an appropriate choice of the bias can bring advantages over learning the tasks independently. In \cref{estimation_bias} we describe the aggressive variant of our method and we show that it is able to infer a ``good'' bias vector from a sequence of tasks' datasets providing comparable guarantees to the best bias vector in hindsight. In \cref{statistical_setting} we describe how the method can be converted into a multi-task method in the statistical setting. The description and the analysis of the lazy variant of the method are postponed to \cref{lazy_version}.  
Finally, in \cref{experiments} we test our method in practice and in \cref{conclusion} we draw our conclusion. The proofs we skipped in the main body are postponed to the appendix.

\noindent {\bf Previous Work.} 
The idea of inferring a common bias vector shared among a 
set of low-variance tasks is a well-established and simple approach.
It was originally investigated in the multi-task learning setting
for a finite set of tasks \cite{cavallanti2010linear,evgeniou2005learning,maurer2006}.
The success of this approach in this setting motivated its application 
also to meta-learning, both in batch and online \cite{balcan2019provable,denevi2019learning,denevi2018learning,denevi2019online,khodak2019adaptive,pentina2014pac} 
fashion. The problem of inferring a good bias shared among a set of tasks 
is also closely related to the fine tuning problem (see e.g. \cite{finn2019online}), 
where the goal is to find a good starting point for a specific family of learning algorithms 
over a set of tasks. All the works mentioned above are innovative in their own aspects,
however, they require tuning at least one hyper-parameter. Among them, \cite{khodak2019adaptive} is perhaps the most careful in this aspect, since it develops methods in which the hyper-parameters are adaptively chosen, but in order to 
reach this target, the authors require to constrain the weight vectors to a bounded set.
This does not solve completely the issue above, since in practice one has still to 
choose an appropriate set. The critical aspect of designing parameter-free online algorithms has been already pointed out and addressed in the single task setting, 
see e.g. \cite{orabona2014simultaneous,orabona2016coin,streeter2012no}
and references therein. In this work we show how ideas developed in those papers 
for the single-task setting can be applied to design online multi-task learning 
and meta-learning methods that are well suited to low-variance sequences of tasks
and do not require tuning any hyper-parameter.

%---------------------------------------------------------------------------------------------------------

\section{SETTING}
\label{setting}

In this work, we consider the OWO setting outlined 
in \cite{balcan2019provable,denevi2019online,khodak2019adaptive},
in which, the learner is asked to tackle a sequence of online supervised tasks. 

Each task is associated to an input space $\X$ and an output space $\Y$.
The learner incrementally receives a sequence of datapoints $\data = 
(z_i)_{i = 1}^n = ( x_{i}, y_{i})_{i=1}^n \in ( \X \times \Y )^n$ from
the task and is asked to make a prediction after each point is observed. 
Specifically, at each step $i \in \{1, \dots, n\}$: 
(a) a datapoint $z_i = (x_i, y_i)$ is observed, 
(b) the learner incurs the error $\ell_i(\hat y_i)$, where $\ell_i(\cdot) = \ell(\cdot,y_i)$ for a loss function $\ell$ and $\hat y_i$ is the current outcome (prediction) of the
algorithm, (c) the algorithm updates its prediction $\hat{y}_{i+1}$ using the 
last point it has received. Throughout we let $\X \subseteq \Real^d$, $\Y \subseteq \Real$ and we consider algorithms that perform linear predictions of the form $\hat{y}_{i} = \langle x_i, w_i \rangle$, where $(w_i)_{i = 1}^n$ is a sequence of weight vectors updated by the algorithm and $\langle \cdot, \cdot \rangle$ denotes the standard inner product in $\Real^d$. This assumption can be relaxed by introducing a feature map 
on the inputs. The performance of the algorithm is evaluated by looking at the regret of its iterates over the dataset $\Zn$, i.e.
\begin{equation}
\sum_{i=1}^n \ell_i(\langle x_i, w_i \rangle)
- \min_{w \in \Real^d} \sum_{i=1}^n \ell_i(\langle x_i, w \rangle).
\end{equation}
%where, we are assuming the existence of the minimum norm vector
%$\hat w$ minimizing $\sum_{i=1}^n \ell_i(\langle x_i, \cdot \rangle)$
%over $\Real^d$.

The algorithm we will use in our framework is identified by a bias (meta-parameter) $\theta \in \Real^d$ and the aim is to adapt $\theta$ to a sequence of learning tasks.
To this end, we introduce one more algorithm (a meta-algorithm) that updates the bias as the tasks are incrementally observed. 
We consider two variants of such an algorithm. The first one updates 
the bias after each point is observed and the second variant updates the bias only at the end of each task's training sequence. As we shall see, the main advantage of the first strategy will be to obtain faster learning bounds. However, when we move to the 
statistical setting, the first variant can be converted into a multi-task learning method, while, 
the second into a meta-learning method, able to generalize also across the tasks.

More precisely, denoting by $T$ the number of tasks, for each task $t \in \{1,\dots,T\}$, we let $\data_t = ( x_{t,i}, y_{t,i})_{i=1}^n$ be the corresponding data sequence. Throughout this work, we follow the convention adopted in \cite{denevi2019online} and we use the double subscript notation {``$_{t,i}$'', to denote the $\{$outer, inner$\}$} task index. While, we use $k = k(t,i) = (t-1)n + i \in \{1, \dots, Tn \}$ to denote the index counting the global number of datapoints received by the algorithm. At each time $k = k(t,i)$: (a) the algorithm receives the point $z_{t,i} = (x_{t,i},y_{t,i})$, 
(b) the algorithm incurs the error $\ell_{t,i}(\langle x_{t,i}, w_{t,i} \rangle)$, where 
$\ell_{t,i}(\cdot) = \ell (\cdot, y_{t,i})$ and $w_{t,i}$ is the current within-task iteration,
(c) the bias (and consequently, the inner algorithm) is updated in 
$\theta_{k+1}$ for the aggressive variant or it is kept frozen to $\theta_t$ 
for the lazy variant until the entire task's dataset has been observed, 
(d) the algorithm performs one updating step by the inner algorithm with the 
current meta-parameter, returning the predictor vector $w_{t,i+1}$.
In a very natural way, the performance of the entire procedure above is measured 
by the regret accumulated across the tasks, i.e. 
\begin{equation*}
\sum_{t = 1}^T \Bigg ( \sum_{i=1}^n \ell_{t,i}(\langle x_{t,i}, w_{t,i} \rangle )
- \min_{w_t \in \Real^d} \sum_{i=1}^n \ell_{t,i}(\langle x_{t,i}, w_t \rangle ) \Bigg ).
\end{equation*}
%where, for each task, we are again assuming the existence of the minimum norm
%vector $\hat w_t$ minimizing $\sum_{i=1}^n \ell_{t,i}(\langle x_{t,i}, \cdot \rangle)$
%over $\Real^d$.

We conclude this section by introducing the following standard assumption
which will be used in the following.

\begin{assumption}[Bounded Inputs and Convex Lipschitz Loss]
\label{ass1}
Let $\ell(\cdot, y)$ be convex and $L$-Lipschitz for any $y \in \Y$
and let $\X \subseteq \B(0, \rx)$, where, for any center $c \in \Real^d$ 
and radius $r > 0$, we have introduced the Euclidean ball
\begin{equation}
\B(c,r) = \Big \{ v \in \Real^d: \| v - c \| \le r \Big \}.
\end{equation}
\end{assumption}

%-------------------------------------------------------------------------------------------------------

\section{PRELIMINARIES}
\label{preliminaries}

\begin{algorithm}[t]
\caption{{\fontsize{9pt}{10pt}\selectfont One-Dimension Coin Betting Algorithm
based on Krichevsky-Trofimov (KT) estimator, \cite[Alg. $1$]{orabona2016coin}}} \label{coin_betting_alg}
{\fontsize{9pt}{10pt}\selectfont
\begin{algorithmic}
\State
\State {\bfseries Input} $(g_k)_{k = 1}^K$, $g_k \in \Real$,
$| g_k | \le C$, $\epsilon > 0$
\vspace{.1cm}
\State {\bfseries Initialize} $b_1 = 0$, $u_1 = \epsilon$, $p_1 = b_1 u_1$
\vspace{.1cm}
\State {\bfseries For} $k = 1, \dots, K$
\vspace{.1cm}
\State ~ Receive $g_k$
\vspace{.1cm}
\State ~ Define $u_{k+1} 
%= \epsilon - \frac{1}{C} \sum_{j = 1}^k g_j p_j
= u_k - \frac{1}{C} g_k p_k$ 
\vspace{.1cm}
\State ~ Define $b_{k+1} 
%= - \frac{1}{C k} \sum_{j = 1}^k g_j
= \frac{1}{k} \big ( (k-1) b_k - \frac{1}{C} g_k \big )$ 
\vspace{.1cm}
\State ~ Update $p_{k+1} = b_{k+1} u_{k+1}$
\vspace{.1cm}
\State {\bfseries End}
\vspace{.1cm}
\State {\bfseries Return} $(p_k)_{k = 1}^K$
\end{algorithmic}}
\end{algorithm}

Our method is based on parameter-free online learning.
In this section, we briefly recall two well-known 
parameter-free online algorithms: the one-dimension 
coin betting algorithm in \cref{coin_betting_alg}
and the online projected subgradient algorithm in \cref{proj_sub_alg}. 
In the following, we will use these algorithms to build our framework. 
We note that \cref{proj_sub_alg} does not require tuning any 
hyper-parameter and \cref{coin_betting_alg} requires choosing just 
one hyper-parameter (the initial wealth $\epsilon > 0$). However,
as we will see in the following, there is a quite wide range in which the 
choice of such a hyper-parameter does not affect the overall performance of
the algorithm. For this reason, both \cref{coin_betting_alg} and 
\cref{proj_sub_alg} can be considered parameter-free algorithms.
We start from describing \cref{coin_betting_alg}. 

{\bf One-Dimension Coin Betting Algorithm.}
\cref{coin_betting_alg} coincides with the scalar version 
of the Krichevsky-Trofimov (KT) algorithm described in \cite[Alg. $1$]{orabona2016coin}.
The algorithm takes in input an initial wealth $\epsilon > 0$. At each 
iteration $k$, the algorithm receives a value $g_k \in \Real$ with absolute 
value $| g_k | \le C$ for some $C > 0$, it updates 
a betting fraction $b \in \Real$ and a
wealth $u \in \Real$ and, then, it multiplies them together
to update the global iteration $p = b u$. The linear regret 
of \cref{coin_betting_alg} can be bounded as described in 
the following proposition.

\begin{proposition}
[Regret Bound for \cref{coin_betting_alg}, {\cite[Cor. $5$]{orabona2016coin}}]
\label{coin_betting_alg_regret}
The iterations $(p_k)_{k = 1}^K$ returned by \cref{coin_betting_alg} satisfy the 
following linear regret bound w.r.t. a competitor scalar $p \in \Real$ 
%\gd{positive?}
\begin{equation}
\sum_{k = 1}^K g_k \bigl( p_k - p \bigr) \le 
C \Bigg [ \epsilon + \Phi \Big( \epsilon^{-1} | p| K \Big) | p |\sqrt{K} \Bigg ]
\end{equation}
where, for any $a \in \Real$, we have introduced the
function $\Phi (a) = \sqrt{\log \big ( 1 + 24 a^2 \big )}$.
\end{proposition}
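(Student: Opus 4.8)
The plan is to recognize \cref{coin_betting_alg} as the coin-betting reduction of one-dimensional online linear optimization, run with the Krichevsky--Trofimov (KT) betting strategy, and then combine two ingredients: a lower bound on the ``wealth'' produced by the KT bettor, and a Fenchel-conjugate argument converting that wealth lower bound into the stated regret bound.

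First I would set up the betting picture. Writing $\tilde g_k = g_k/C \in [-1,1]$ and using $u_{k+1} = u_k - \tilde g_k p_k = u_k(1 - \tilde g_k b_k)$, unrolling gives, with $W_K := u_{K+1}$,
\[
W_K = \epsilon \prod_{k=1}^K (1 - \tilde g_k b_k) = \epsilon - \sum_{k=1}^K \tilde g_k p_k ,
\qquad \text{hence} \qquad \sum_{k=1}^K g_k p_k = C(\epsilon - W_K).
\]
A one-line induction using $b_1 = 0$ and $k\,b_{k+1} = (k-1)b_k - \tilde g_k$ shows $|b_k| \le 1$, so that $b_k = -\tfrac{1}{k-1}\sum_{j<k}\tilde g_j$, the bet $p_k = b_k u_k$ uses a fraction at most $1$ of the current wealth, and $W_k \ge 0$ throughout. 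Writing $S_K := \sum_{k=1}^K \tilde g_k$ and using $\sum_{k} g_k = C S_K$, we get
\[
\sum_{k=1}^K g_k (p_k - p) = C\bigl(\epsilon - W_K - p\, S_K\bigr),
\]
so it suffices to lower bound $W_K$ as a function of $S_K$ and then maximize $-p\,S_K - W_K$ over the possible values of $S_K$.

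The core technical step is that the KT fraction $b_{k+1} = -\tfrac1k\sum_{j\le k}\tilde g_j$ is exactly the one making $\prod_k(1 - \tilde g_k b_k)$ at least the Krichevsky--Trofimov sequential density on the sign pattern of the $\tilde g_k$; a standard convexity argument extends the bound from binary $\pm 1$ outcomes to arbitrary $\tilde g_k \in [-1,1]$. A Stirling estimate of the resulting central binomial coefficient / Beta integral, together with a Pinsker-type inequality to replace the exact KL exponent by a quadratic one, then yields a bound of the shape $W_K \ge \tfrac{\epsilon}{2\sqrt K}\exp\!\bigl(S_K^2/(2K)\bigr)$. I expect this to be the most delicate part; I would invoke the known KT analysis rather than reprove it, but it is the sub-Gaussian shape of the wealth that drives everything.

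Finally I would close by convex duality. Substituting the wealth lower bound,
\[
\sum_{k=1}^K g_k(p_k - p) \le C\epsilon + C\sup_{S \in \Real}\Bigl[\, |p|\,S - \tfrac{\epsilon}{2\sqrt K}\,e^{S^2/(2K)} \,\Bigr],
\]
i.e.\ $C\epsilon$ plus $C$ times the Fenchel conjugate at $|p|$ of the convex map $S\mapsto \tfrac{\epsilon}{2\sqrt K}e^{S^2/(2K)}$. This is a one-variable optimization: at the maximizer the first-order condition forces $S^2/(2K)$ to be of order $\log(\epsilon^{-1}|p|K)$, and substituting back gives a bound $|p|\sqrt{K}$ times a square-root-of-log factor. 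Bounding that logarithmic factor by $\log(1 + 24 a^2)$ with $a = \epsilon^{-1}|p|K$ — the constant $24$ being precisely what makes this crude estimate valid for all $K$ and all $p$ — produces $C\,\Phi(\epsilon^{-1}|p|K)\,|p|\sqrt K$, and combining with the previous display proves the claim. The only work beyond the KT wealth bound is this elementary optimization and the $\log$ inequality.
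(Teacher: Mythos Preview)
The paper does not prove this proposition at all: it is stated as a direct citation of \cite[Cor.~5]{orabona2016coin} and is used as a black box throughout (in the proofs of \cref{regret_single}, \cref{regret_across_meta}, and \cref{regret_across_meta_lazy}). There is therefore no ``paper's own proof'' to compare against.

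That said, your sketch is precisely the argument of Orabona--P\'al: the reward--regret identity $\sum_k g_k p_k = C(\epsilon - W_K)$, the KT wealth lower bound of sub-Gaussian shape in $S_K$, and the Fenchel-conjugate step turning $-pS_K - f(S_K)$ into the $|p|\sqrt{K\log(\cdot)}$ form. One technical caveat worth tightening if you flesh this out: with the recursion as written in \cref{coin_betting_alg} you get $b_{k+1} = -\tfrac1k\sum_{j\le k}\tilde g_j$, so $|b_{k+1}|$ can equal $1$ exactly (not strictly less), and the wealth can hit zero at the boundary; the KT analysis in \cite{orabona2016coin} handles this via the $1/t$ (rather than $1/(t{-}1)$) normalization, so you should either note the index shift or argue that the wealth lower bound still holds in the degenerate case. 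Apart from that, the structure and the constants line up with the cited corollary.
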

As we can see from the bound above, the dependency of the
bound on the hyper-parameter $\epsilon$ is not problematic;
any value in $[1,\sqrt{K}]$ does not affect the $\sqrt{K}$ rate.
We now recall the main properties of \cref{proj_sub_alg}.

\begin{algorithm}[t]
\caption{{\fontsize{9pt}{10pt}\selectfont Online Projected Subgradient Algorithm, \cite[Alg. $6$]{hazan2016introduction}}} \label{proj_sub_alg}
{\fontsize{9pt}{10pt}\selectfont
\begin{algorithmic}
\State
\State {\bfseries Input} $\B \subset \Real^d$,
$(g_k)_{k = 1}^K$, $g_k \in \Real^d$, $\| g_k \| \le C$
\vspace{.1cm}
\State {\bfseries Initialize} $v_1 \in \B$
\vspace{.1cm}
\State {\bfseries For} $k = 1, \dots, K$
\vspace{.1cm}
\State ~ Receive $g_k$
\vspace{.1cm}
\State ~ Define $\gamma_k = \frac{{\rm diam}(\B)}{C \sqrt{2 k}}$
\vspace{.1cm}
\State ~ Update $v_{k+1} 
= \proj_\B \bigl( v_k - \gamma_k g_k \bigr)$ 
\vspace{.1cm}
\State {\bfseries End}
\vspace{.1cm}
\State {\bfseries Return} $(v_k)_{k = 1}^K$
\end{algorithmic}}
\end{algorithm}

{\bf Projected Online Subgradient Algorithm.}
\cref{proj_sub_alg} coincides with \cite[Alg. $6$]{hazan2016introduction}.
The algorithm takes in input a convex, closed and non-empty set 
$\B \subset \Real^d$ with diameter
\begin{equation}
{\rm diam}(\B) = \sup_{v, v' \in \B} \| v - v' \|.
\end{equation}
At each iteration $k$, the algorithm receives a vector $g_k \in \Real^d$ with norm 
$\| g_k \| \le C$ for some $C > 0$, it performs a descent step along this vector with 
an appropriate length and, then, it projects the resulting vector on the set $\B$.
The linear regret of \cref{proj_sub_alg} can be bounded 
as described in the following proposition.

\begin{proposition}[Regret Bound for \cref{proj_sub_alg}, {\cite[Thm. $3.1$]{hazan2016introduction}}]
\label{proj_sub_alg_regret}
The iterations $(v_k)_{k = 1}^K$ returned by \cref{proj_sub_alg} satisfy the following
linear regret bound w.r.t. a competitor vector $v \in \B$
\begin{equation}
\sum_{k = 1}^K \langle g_k, v_k - v \rangle \le 
C \sqrt{2} ~ {\rm diam}(\B) \sqrt{K}.
\end{equation}
\end{proposition}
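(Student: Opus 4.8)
The plan is to run the standard ``one-step descent lemma'' analysis of projected subgradient descent and then handle the decreasing step sizes $\gamma_k = {\rm diam}(\B)/(C\sqrt{2k})$ by a summation-by-parts argument. Throughout, write $D = {\rm diam}(\B)$ and fix any competitor $v \in \B$; note that no convexity or Lipschitz property of the losses is actually needed, since the claim is purely a bound on the \emph{linear} regret against $v$ for arbitrary vectors $g_k$ with $\|g_k\| \le C$.

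First I would use that the Euclidean projection onto the convex, closed, non-empty set $\B$ is non-expansive and fixes every point of $\B$, so that for each $k$
\begin{equation*}
\| v_{k+1} - v \|^2 = \big\| \proj_\B(v_k - \gamma_k g_k) - v \big\|^2 \le \| v_k - v - \gamma_k g_k \|^2 = \| v_k - v \|^2 - 2\gamma_k \langle g_k, v_k - v \rangle + \gamma_k^2 \| g_k \|^2 .
\end{equation*}
Rearranging and using $\|g_k\| \le C$ yields the per-step inequality $\langle g_k, v_k - v \rangle \le \big( \| v_k - v \|^2 - \| v_{k+1} - v \|^2 \big)/(2\gamma_k) + \gamma_k C^2 / 2$. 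Summing over $k = 1, \dots, K$, the second term contributes $\tfrac{C^2}{2}\sum_{k=1}^K \gamma_k$, while for the first term --- which no longer telescopes exactly because $\gamma_k$ varies --- I would regroup the sum (Abel summation) and use that $1/\gamma_k$ is non-decreasing together with $\| v_k - v \| \le D$ to get
\begin{equation*}
\sum_{k=1}^K \frac{\| v_k - v \|^2 - \| v_{k+1} - v \|^2}{2\gamma_k} \le \frac{\| v_1 - v \|^2}{2\gamma_1} + \sum_{k=2}^K \| v_k - v \|^2 \Big( \frac{1}{2\gamma_k} - \frac{1}{2\gamma_{k-1}} \Big) \le \frac{D^2}{2\gamma_K}.
\end{equation*}

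Combining the two pieces gives $\sum_{k=1}^K \langle g_k, v_k - v \rangle \le D^2/(2\gamma_K) + \tfrac{C^2}{2}\sum_{k=1}^K \gamma_k$. Finally I would substitute $\gamma_k = D/(C\sqrt{2k})$: the first term becomes $CD\sqrt{2K}/2 = CD\sqrt{K}/\sqrt{2}$, and, using $\sum_{k=1}^K k^{-1/2} \le 2\sqrt{K}$, the second becomes at most $\tfrac{C^2}{2}\cdot \tfrac{D}{C\sqrt{2}}\cdot 2\sqrt{K} = CD\sqrt{K}/\sqrt{2}$. Adding the two contributions gives $\sqrt{2}\, C D \sqrt{K} = C\sqrt{2}\,{\rm diam}(\B)\sqrt{K}$, which is the claimed bound. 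The only mildly delicate step is the summation-by-parts handling of the time-varying step size: one must check that the coefficient of each $\| v_k - v \|^2$ stays non-negative (which holds because $\gamma_k$ is non-increasing) before bounding it by $D^2$; everything else is a routine computation.
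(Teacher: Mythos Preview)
Your argument is correct: the one-step descent lemma, the Abel-summation handling of the time-varying step sizes, and the final numerical evaluation all go through exactly as you write. One minor remark is that the paper does not actually supply a proof of this proposition; it simply quotes the result from Hazan's textbook \cite[Thm.~3.1]{hazan2016introduction}, and your proof is precisely the standard argument given there (non-expansiveness of the projection, bounding each $\|v_k - v\|^2$ by ${\rm diam}(\B)^2$, and the inequality $\sum_{k=1}^K k^{-1/2}\le 2\sqrt{K}$). So there is nothing to compare beyond noting that you have reproduced the cited proof faithfully.
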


We now have all the ingredients necessary to introduce the family
of within-task algorithms.

%---------------------------------------------------------------------------------------------------------

\section{BIASED PARAMETER-FREE ONLINE ALGORITHM}
\label{within_task_family}

In this section, we consider a family of within-task algorithms 
parametrized by a bias vector $\theta \in \Real^d$. The idea of introducing a bias 
is a well-established approach in the multi-task learning and meta-learning literature, 
see e.g. \cite{balcan2019provable,cavallanti2010linear,denevi2019learning,denevi2018learning,denevi2019online,evgeniou2005learning,khodak2019adaptive,maurer2006,pentina2014pac}. However, a key novel aspect of our work is to focus on a family of parameter-free algorithms -- we are not aware of previous work dealing with a similar framework within the multi-task learning or meta-learning literature. Such a choice allows us to avoid expensive validation procedures which are not even allowed in the so-called `adversarial setting', where the learner is asked to make predictions on the fly, after observing data only once. 
Specifically, the algorithm we choose is reported in  \cref{algorithm_free_bias} and it coincides with a variant of the online parameter-free \cite[Alg. $2$]{cutkosky2018black} in which we add a translation w.r.t. a bias vector $\theta \in \Real^d$, which is specified in advanced to the algorithm.

\begin{algorithm}[t]
\caption{{\fontsize{9pt}{10pt}\selectfont Parameter-Free Algorithm with Fixed Bias, Biased Version of {\cite[Alg. $2$]{cutkosky2018black}}}} \label{algorithm_free_bias}
{\fontsize{9pt}{10pt}\selectfont
\begin{algorithmic}
\State
\State {\bfseries Input} $\theta \in \Real^d$, $\Zn = (z_i)_{i = 1}^n = (x_i, y_i)_{i = 1}^n$, $e > 0$, $L$ and $\rx$ as in \cref{ass1}
\vspace{.1cm}
\State {\bfseries Initialize} $b_1 = 0$, $u_1 = e$, $p_1 = b_1 u_1$, 
$v_1 = 0 \in \B(0,1)$
%$b_1 = 0 \in \Real$ magnitude's betting fraction, $u_1 = e$ 
%magnitude's wealth, $p_1 = b_1 u_1$ magnitude, $v_1 = 0 \in \B(0,1)$ 
%direction %$w_1 = p_1 v_1 + \theta$ global vector
\vspace{.1cm}
\State {\bfseries For} $i = 1, \dots, n$
%\vspace{.2cm}
%\State ~~ \ds{$\#~1.$ Updating the global vector $w$}
\vspace{.1cm}
\State ~ 1.~ Vector update $w_i = p_i v_i + \theta$
%\vspace{2.cm}
%\State ~~ \ds{$\#~2.$ Receiving the new data-point and computing the new subgradient}
\vspace{.1cm}
\State ~ 2a. Receive the datapoint $z_i = (x_i, y_i)$
\vspace{.1cm}
\State ~ 2b. Compute $g_i = s_i x_i$, $s_i \in \partial \ell_i ( \langle x_i, w_i \rangle) 
\in \Real$
%\vspace{.2cm}
%\State ~~ \ds{$\#~3.$ Updating the direction $v$}
\vspace{.1cm}
\State ~ 3a. Define $\gamma_i = \frac{1}{L \rx} \sqrt{\frac{2}{i}}$
\vspace{.1cm}
\State ~ 3b. Direction update $v_{i+1} = \proj_{\B(0,1)}\bigl( v_i - 
\gamma_i g_i \bigr)$
%\vspace{.2cm}
%\State ~~ \ds{$\#~4.$ Updating the magnitude $p$}
\vspace{.1cm}
\State ~ 4a. Define $u_{i+1} 
%= \epsilon - \frac{1}{\rx L} \sum_{j = 1}^i \langle g_j, v_j \rangle p_j
= u_i - \frac{1}{\rx L} \langle g_i, v_i \rangle p_i$ 
\vspace{.1cm}
\State ~ 4b. Define $b_{i+1} 
%= - \frac{1}{\rx L i} \sum_{j = 1}^i \langle g_j, v_j \rangle
= \frac{1}{i} \big ( (i-1) b_i - \frac{1}{\rx L} \langle g_i, v_i \rangle \big )$ 
\vspace{.1cm}
\State ~ 4c. Magnitude update $p_{i+1} = b_{i+1} u_{i+1}$
\vspace{.1cm}
\State {\bfseries End}
\vspace{.1cm}
\State {\bfseries Return} $(w_i)_{i = 1}^n$
\end{algorithmic}}
\end{algorithm}

Similarly to the discussion 
in \cite{cutkosky2018black}, the motivation behind the algorithm comes from the following simple
observation. For a fixed bias vector $\theta \in \Real^d$, we can always 
rewrite any vector $w \in \Real^d$ w.r.t. the coordinate system centered in $\theta$:
\begin{equation} \label{parametrization_single}
w = p v + \theta
\end{equation}
where
\begin{equation} \label{parametrization_single_2}
p = \| w - \theta \| \in \Real
\quad \quad 
 v = \frac{w - \theta}{\| w - \theta \|} \in \B(0,1).
\end{equation}
\cref{algorithm_free_bias} receives in input the bias vector $\theta \in \Real^d$
and, exploiting the decomposition above, it uses the datapoints $\Zn = (z_i)_{i = 1}^n 
= (x_i, y_i)_{i = 1}^n$ it receives, in order to incrementally learn
\begin{itemize}
\item the direction $v \in \B(0,1)$ of the vector $w - \theta$ by applying 
\cref{proj_sub_alg} on the ball $\B(0,1)$ to the subgradient vectors 
$(g_i)_{i = 1}^n$, where $g_i \in \partial \ell_i ( \langle x_i, \cdot \rangle)(w_i)$, 
with $w_i$ the current global vector returned by the algorithm (steps $3a$-$b$),
\item the magnitude $p$ of the vector $w - \theta$ by 
applying \cref{coin_betting_alg} to the scalars $(\langle g_i, v_i \rangle)_{i = 1}^n$,
with $v_i$ the current direction w.r.t. the bias vector $\theta$ 
(steps $4a$-$c$).
\end{itemize} 

We remark that, in order to update the magnitude $p$, differently 
from \cref{algorithm_free_bias} in which we use the KT 
algorithm in \cref{coin_betting_alg}, the authors in \cite[Alg. $2$]{cutkosky2018black} use a more sophisticate coin betting algorithm based on online Newton step, see \cite[Alg. $1$]{cutkosky2018black}. This allows them to get more refined regret bounds, 
which can bring an advantage for instance in the smooth setting. 
In this work, we employ a simplified version of the algorithm for the theoretical
analysis, since the derived regret bounds are simpler and, at the same time, such 
a simplification does not affect the main message we want to convey.

In the following result we report a regret bound for 
\cref{algorithm_free_bias}. We make no claim of originality in the proof of the above result, which is a simple adaptation of the proof technique  
of \cite[Thm. $2$]{cutkosky2018black}, adding the translation w.r.t. the
bias and changing the coin betting algorithm to estimate the magnitude,
as explained above. We provide below the main ideas used in the proof of the statement
because they will be used also in the following. The full proof is reported 
in \cref{proof_regret_single} for completeness.

\begin{restatable}[Single-Task Regret Bound for \cref{algorithm_free_bias}, Adaptation of {\cite[Thm. $2$]{cutkosky2018black}}]{proposition}{RegretSingle}
\label{regret_single}
Let \cref{ass1} hold and let $(w_i)_{i = 1}^{n}$ be the iterates generated  by \cref{algorithm_free_bias} with bias $\theta \in \Real^d$. Then, for any 
$w \in \Real^d$,
\begin{equation*}
\begin{split}
& \sum_{i = 1}^n \ell_i(\langle x_i, w_i \rangle) - \ell_i(\langle x_i, w \rangle) 
\le \sum_{i = 1}^n \big \langle g_i, w_i - w \big \rangle \\
& ~ \le \rx L  \Bigg [ e + \Bigg ( 2 \sqrt{2} + \Phi \Big( e^{-1} \| w - \theta \| n \Big) \Bigg ) \| w - \theta \| \sqrt{n} \Bigg ]
\end{split}
\end{equation*}
where, $\Phi(\cdot)$ is defined as in \cref{coin_betting_alg_regret}.
\end{restatable}

\begin{proofsketch}
While the first inequality follows by the convexity of the loss function (see \cref{ass1}) 
and the definition of the subgradients $(g_i)_{i = 1}^n$, the proof of the second 
inequality is essentially based on the magnitude-direction decomposition used in the 
algorithm and explained above. Specifically, by definition of the $w_i$ in \cref{algorithm_free_bias} and the rewriting of $w \in \Real^d$ as in \cref{parametrization_single}--\cref{parametrization_single_2}, one can 
show that the linear regret can be bounded by the sum of two terms,
\begin{equation} 
\sum_{i = 1}^n \big \langle g_i, w_i - w \big \rangle
\le R(p) + p R( v),
\end{equation}
where, $p$ and $v$ are defined in \cref{parametrization_single_2} and 
\begin{equation*}
R(p) = \sum_{i = 1}^n \big \langle g_i, v_i \big \rangle ( p_i - p ) \quad 
R(v) = \sum_{i = 1}^n \big \langle g_i, v_i - v \big \rangle
\end{equation*}
coincide, respectively, with the regret of the magnitudes $(p_i)_{i = 1}^n$
generated by \cref{coin_betting_alg} and the regret of the directions 
$(v_i)_{i = 1}^n$ generated by \cref{proj_sub_alg}. The statement then 
follows from exploiting \cref{ass1} in order to bound the two terms by \cref{coin_betting_alg_regret} and \cref{proj_sub_alg_regret}, respectively.
\end{proofsketch}

As observed in \cite{cutkosky2018black}, the leading term in the bound 
above is equivalent -- up to the logarithmic factor contained in the term 
$\Phi(e^{-1} \| w - \theta \| n)$ -- to the optimal bound $\mathcal{O}
(\| w - \theta \| \sqrt{n})$ one would get by using a translated version 
of online subgradient algorithm
\begin{equation} \label{translated_algo_tuning}
w_1 = 0 \in \Real^d \quad 
w_i = w_{i -1} - \gamma g_{i-1} + \theta \quad i \ge 2, 
\end{equation}
with oracle-tuning of the step-size $\gamma > 0$ requiring knowledge 
of the target vector's magnitude $\| w - \theta \|$ in hindsight. Moreover, 
since the bound matches available lower-bounds, such additional logarithmic 
terms are unavoidable and they represent the price we pay by estimating the 
magnitude from the data. We also notice that, by induction argument it is easy 
to show that the translated iteration in \cref{translated_algo_tuning} coincides 
with standard (untranslated) online subgradient algorithm with initial point 
$\theta$. As a consequence, \cref{algorithm_free_bias} 
can be also interpreted as a parameter-free variant of the standard family used in 
fine tuning meta-learning \cite{finn2019online}.

%---------------------------------------------------------------------------------------------------------

\section{MOTIVATION FOR THE BIAS}
\label{motivation_bias}

In this section, we study the advantage of using an appropriate bias term in \cref{algorithm_free_bias}. Specifically, we study the performance obtained 
by applying \cref{algorithm_free_bias} with the same bias vector 
$\theta$ over a sequence of $T$ datasets ${\bf \Zn} = (\Zn_t)_{t = 1}^T$, 
$\Zn_t = (z_{t,i})_{i = 1}^n = (x_{t,i}, y_{t,i})_{i = 1}^n$ deriving 
from $T$ different tasks w.r.t. a sequence of target vectors $(w_t)_{t = 1}^T$
associated to the tasks. We are implicitly parametrizing the 
vector associated to each task as in \cref{parametrization_single}--\cref{parametrization_single_2}, according to the same bias vector 
$\theta \in \Real^d$:
\begin{equation} \label{parametrization_single_t}
w_t = p_t v_t + \theta
\end{equation}
\begin{equation} \label{parametrization_single_t2}
p_t = \| w_t - \theta \| \in \Real \quad 
v_t = \frac{w_t - \theta}{\| w_t - \theta \|} \in \B(0,1).
\end{equation}
This situation is analyzed below.

\begin{corollary}[Across-Tasks Regret Bound for \cref{algorithm_free_bias}]
\label{regret_across}
Let \cref{ass1} hold. Consider $T$ datasets ${\bf \Zn} = (\Zn_t)_{t = 1}^T$, $\Zn_t = (z_{t,i})_{i = 1}^n = (x_{t,i}, y_{t,i})_{i = 1}^n$ deriving from $T$ different tasks.
For any task $t = 1, \dots, T$, let $(w_{t,i})_{i = 1}^{n}$ be the iterates generated by \cref{algorithm_free_bias} over the dataset $\Zn_t$ with bias $\theta$. Then, for any sequence $(w_t)_{t = 1}^T$, $w_t \in \Real^d$,
\begin{eqnarray} 
\nonumber
& \sum\limits_{t = 1}^T \sum\limits_{i = 1}^n \ell_{t,i}( \langle x_{t,i}, w_{t,i} \rangle)
- \ell_{t,i}( \langle x_{t,i}, w_t \rangle) \quad \quad \quad \\ \nonumber
& \le \sum\limits_{t = 1}^T \sum\limits_{i = 1}^n \big \langle g_{t,i}, w_{t,i} - w_t \big \rangle  \quad \quad \quad \quad \quad\\  \label{termA_erased}
& \leq \rx L \Bigg [ e T + \Big (2 \sqrt{2} {\rm Var} (\theta) + \widehat {\rm Var} (\theta) \Big ) \sqrt{n} T \Bigg ]  \quad
\end{eqnarray}
where 
\begin{equation} \label{variance}
{\rm Var} (\theta) = \frac{1}{T} \sum_{t = 1}^T \| w_t - \theta \|,
\end{equation}
%\begin{equation} \label{termA}
%\text{A} = \rx L \Bigg [ e T + \Big (2 \sqrt{2} {\rm Var}_{{\rm MTL}}(\theta) + \widehat {\rm Var}_{{\rm MTL}}(\theta) \Big ) \sqrt{n} T \Bigg ] 
%\end{equation}
\begin{equation}
\widehat {\rm Var} (\theta) =
 \frac{1}{T} \sum_{t = 1}^T 
\Phi \Big( e^{-1} \| w_t - \theta \| n \Big) \| w_t - \theta \|
\end{equation}
and the function $\Phi(\cdot)$ is defined in \cref{coin_betting_alg_regret}.
\end{corollary}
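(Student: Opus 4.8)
The plan is to obtain \cref{regret_across} as a direct consequence of the single-task bound in \cref{regret_single}, applied task by task and summed over $t = 1, \dots, T$. First I would note that, since the bias $\theta$ is the same across all tasks and within each task \cref{algorithm_free_bias} is run independently on $\Zn_t$ (the within-task state $b, u, p, v$ is re-initialized at the start of each task), the hypotheses of \cref{regret_single} hold separately for every $t$. Applying that proposition to the dataset $\Zn_t$ with the competitor $w_t \in \Real^d$ gives, for each $t$,
\begin{equation*}
\sum_{i=1}^n \ell_{t,i}(\langle x_{t,i}, w_{t,i}\rangle) - \ell_{t,i}(\langle x_{t,i}, w_t\rangle)
\le \sum_{i=1}^n \langle g_{t,i}, w_{t,i} - w_t\rangle
\le \rx L\Bigg[ e + \Big(2\sqrt{2} + \Phi\big(e^{-1}\|w_t - \theta\| n\big)\Big)\|w_t - \theta\|\sqrt{n}\Bigg].
\end{equation*}

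Next I would sum this inequality over $t = 1, \dots, T$. The left-hand side and the middle term become exactly the two quantities displayed in the statement. On the right-hand side, the term $\rx L\, e$ summed over $t$ produces $\rx L\, e T$, while the factor $\rx L \sqrt{n}$ is common to the remaining terms, so it factors out and what is left inside is $\sum_{t=1}^T \big(2\sqrt{2}\,\|w_t - \theta\| + \Phi(e^{-1}\|w_t-\theta\|n)\|w_t-\theta\|\big)$. Recognizing that $\frac{1}{T}\sum_{t}\|w_t-\theta\| = {\rm Var}(\theta)$ and $\frac{1}{T}\sum_t \Phi(e^{-1}\|w_t-\theta\|n)\|w_t-\theta\| = \widehat{\rm Var}(\theta)$ by the definitions in \cref{variance} and the line after it, the sum equals $T\big(2\sqrt{2}\,{\rm Var}(\theta) + \widehat{\rm Var}(\theta)\big)$. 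Substituting this back yields precisely the claimed bound $\rx L\big[eT + (2\sqrt{2}\,{\rm Var}(\theta) + \widehat{\rm Var}(\theta))\sqrt{n}\,T\big]$, completing the argument.

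There is essentially no hard step here: the proof is a one-line invocation of \cref{regret_single} followed by summation and a bookkeeping substitution of the definitions of ${\rm Var}(\theta)$ and $\widehat{\rm Var}(\theta)$. The only point that deserves an explicit remark is the observation that \cref{algorithm_free_bias} is genuinely restarted for each task with the same bias $\theta$ and fresh internal state, so that the single-task regret guarantee is applicable verbatim to each $\Zn_t$ — this is what justifies treating the across-tasks regret as a plain sum of $T$ single-task regrets. No concentration, no online-to-batch conversion, and no adaptivity argument is needed, since $\theta$ is held fixed throughout; those ingredients enter only later when $\theta$ itself is learned from the task sequence.
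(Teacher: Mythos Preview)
Your proposal is correct and matches the paper's own proof, which simply states that the bound follows by summing the single-task regret bound of \cref{regret_single} over the $T$ tasks. Your additional remarks about the within-task algorithm being restarted with fresh state and fixed $\theta$ are accurate and make explicit the only point that needs checking.
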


\begin{proof}
The statement directly derives from summing over the datasets
the regret bound in \cref{regret_single}.
\end{proof}

Even though the quantity in \cref{variance} does not coincide 
with the variance of the target vectors $(w_t)_{t = 1}^T$ w.r.t. 
the bias $\theta$, with some abuse of notation, we are denoting
such a quantity by ${\rm Var}(\theta)$. To be precise, \cref{variance}
represents a lower-bound for the variance, indeed, by Jensen's inequality 
we have that  ${\rm Var}(\theta) \le \sqrt{\frac{1}{T} \sum_{t = 1}^T
\| w_t - \theta \|^2}$. We observe that, when the logarithmic term 
is negligible (i.e. $\phi(e^{-1} \| w_t - \theta \| n) \approx 1$) 
%\gd{Explain this.}
for any task $t \in \{1, \dots, T \}$, then $\widehat {\rm Var}(\theta) \approx {\rm Var}(\theta)$. As a consequence, in such a case, the leading term in the bound above is proportional to  
\begin{equation}
\mathcal{O}\Big ( {\rm Var}(\theta) \sqrt{n} T \Big ).
\end{equation}
The conclusion we get from the bound above is exactly in line with previous 
literature addressing the same problem, but by means of methods 
requiring the tuning of at least one hyper-parameter, see e.g.
\cite{balcan2019provable,denevi2019learning,denevi2019online,khodak2019adaptive}. 
Specifically, the bound above suggests that the optimal choice 
for the bias $\theta$ in \cref{algorithm_free_bias}
is the one minimizing the variance of the target vectors $(w_t)_{t = 1}^T$,
namely, their empirical average:
\begin{equation} \label{oracle}
\argmin_{\theta \in \Real^d} \sqrt{\frac{1}{T} \sum_{t = 1}^T \| w_t - \theta \|^2} = \frac{1}{T} \sum_{t = 1}^T w_t.
\end{equation}
Moreover, our analysis confirms the conclusion in 
\cite{balcan2019provable,denevi2019learning,denevi2019online,khodak2019adaptive}: the advantage of using such an optimal bias w.r.t. the unbiased case 
(corresponding to solving the tasks independently) is significant when the 
variance of the tasks' target vectors is much smaller than their second moment. 
%Motivated by these observations, in the next section we address the issue of learning the bias as the tasks are incrementally observed. 

%-------------------------------------------------------------------------------------------------------

\section{LEARNING THE BIAS}
\label{estimation_bias}

\begin{algorithm}[t]
\caption{{\fontsize{9pt}{10pt}\selectfont Parameter-Free Algorithm with Bias Inferred from Data, Aggressive Version}} \label{algorithm_free_bias_continuous_setting}
{\fontsize{9pt}{10pt}\selectfont
\begin{algorithmic}
\State
\State {\bfseries Input} ${\bf \Zn} = (\Zn_t)_{t = 1}^T$, $\Zn_t = (z_{t,i})_{i = 1}^n = (x_{t,i}, y_{t,i})_{i = 1}^n$, $e > 0$, $E > 0$, $L$ and $\rx$ as in \cref{ass1}
%${\bf \Zn} = (\Zn_t)_{t = 1}^T$, $\Zn_t = (z_{t,i})_{i = 1}^n = (x_{t,i}, y_{t,i})_{i = 1}^n$ datasets, $e > 0$ and $E > 0$ initial wealths, $L > 0$ Lipschitz constant of $\ell_{t,i}(\cdot)$ and $\rx > 0$ such that $\| x_{t,i} \| \le \rx$ for any $i = 1, \dots, n$ and $t = 1, \dots, T$
\vspace{.1cm}
\State {\bfseries Initialize} 
$B_{1} = 0$, $U_{1} = E$, 
$P_{1} = B_{1} U_{1}$, 
$V_{1} = 0 \in \B(0,1)$
%$B_{k(1,1)} = 0 \in \Real$ meta-magnitude's betting fraction, 
%$U_{k(1,1)} = E$ meta-magnitude's wealth, 
%$P_{k(1,1)} = B_{k(1,1)} U_{k(1,1)}$ meta-magnitude, 
%$V_{k(1,1)} = 0 \in \B(0,1)$ meta-direction
%%$\theta_{k(1,1)} = P_{k(1,1)} V_{k(1,1)}$ global meta-vector (bias)
\vspace{-.2cm}
\State {\bfseries For} $t = 1, \dots, T$
\vspace{.1cm}
\State ~ Set $b_{t,1} = 0$, $u_{t,1} = e$,
$p_{t,1} = b_{t,1} u_{t,1}$, $v_{t,1} = 0 \in \B(0,1)$
%\State ~ Define $b_{t,1} = 0 \in \Real$ within-task magnitude's betting fraction, 
%$u_{t,1} = e$ within-task magnitude's wealth, 
%$p_{t,1} = b_{t,1} u_{t,1}$ within-task magnitude, 
%$v_{t,1} = 0 \in \B(0,1)$ within-task direction
\vspace{-.2cm}
\State ~ {\bfseries For} $i = 1, \dots, n$
\vspace{.1cm}
\State ~~ 0. Define $k = k(t,i) = (t-1)n + i$
\vspace{.1cm}
%\State ~~ \ds{$\#~1. \text{(META)}$ Updating the global meta-vector $\theta$}
%\vspace{.1cm}
\State ~~ 1. Meta-vector update $\theta_{k} = P_{k} V_{k}$
\vspace{.1cm}
%\State ~~ \ds{$\#~1. \text{(WITHIN)}$ Updating the global within-task vector $w$}
%\vspace{.1cm}
\State ~~ 1. Within-vector update $w_{t,i} = p_{t,i} v_{t,i} + \theta_{k}$
\vspace{.1cm}
%\State ~~ \ds{$\#~2. \text{(META $\&$ WITHIN)}$ Receiving the new data-point and computing the new subgradient}
%\vspace{.1cm}
\State ~~ 2a. Receive the datapoint $z_{t,i} = (x_{t,i}, y_{t,i})$
\vspace{.1cm}
\State ~~ 2b. Compute $g_{t,i} = s_{t,i} x_{t,i}$, $s_{t,i} \in \partial \ell_{t,i} ( \langle x_{t,i}, w_{t,i} \rangle)$
%\vspace{.2cm}
%\State ~~ \ds{$\#~3. \text{(META)}$ Updating the meta-direction $V$}
\vspace{.1cm}
\State ~~ 3A. Define $\eta_{k} = \frac{1}{L \rx} \sqrt{\frac{2}{k}}$
\vspace{.1cm}
\State ~~ 3B. Define $V_{k+1} = \proj_{\B(0,1)}\bigl( V_{k} - \eta_{k} g_{t,i} \bigr)$
%\vspace{.2cm}
%\State ~~ \ds{$\#~3. \text{(WITHIN)}$ Updating the within-task direction $v$}
\vspace{.1cm}
\State ~~ 3a. Define $\gamma_{t,i} = \frac{1}{L \rx} \sqrt{\frac{2}{i}}$
\vspace{.1cm}
\State ~~ 3b. Update $v_{t,i+1} = \proj_{\B(0,1)}\bigl( v_{t,i} - \gamma_{t,i} g_{t,i} \bigr)$
%\vspace{.2cm}
%\State ~~ \ds{$\#~4. \text{(META)}$ Updating the meta-magnitude $P$}
\vspace{.1cm}
\State ~~ 4A. Define $U_{k+1}
%= E - \frac{1}{\rx L} \sum_{m = 1}^t \sum_{j = 1}^{i} \langle g_{m,j}, V_{k(m,j)} \rangle P_{k(m,j)}
= U_{k} - \frac{1}{\rx L} \langle g_{t,i}, V_{k} \rangle P_{k}$
\vspace{.1cm}
\State ~~ 4B. Define $B_{k+1} 
%= - \frac{1}{\rx L k(t,i)} \sum_{m = 1}^t \sum_{j = 1}^{i} \langle g_{m,j}, V_{k(m,j)} \rangle$
%\State ~~~~~~ 
%~~~~~~~~~~~~~~~~~~~~~~~~~~~~~~~~~~~~
%~~~~~~~~~~~~~~~~~~
= \frac{1}{k} \big ( ( k-1 ) B_{k} - \frac{1}{\rx L} \langle g_{t,i}, V_{k} \rangle \big )$ 
\vspace{.1cm}
\State ~~ 4C. Update $P_{k+1} = B_{k+1} U_{k+1}$
%\vspace{.2cm}
%\State ~~ \ds{$\#~4. \text{(WITHIN)}$ Updating the within-task magnitude $p$}
\vspace{.1cm}
\State ~~ 4a. Define $u_{t,i+1}
%= e - \frac{1}{\rx L} \sum_{j = 1}^i \langle g_{t,j}, v_{t,j} \rangle p_{t,j}
= u_{t,i} - \frac{1}{\rx L} \langle g_{t,i}, v_{t,i} \rangle p_{t,i}$
\vspace{.1cm}
\State ~~ 4b. Define $b_{t,i+1} 
%= - \frac{1}{\rx L i} \sum_{j = 1}^i \langle g_{t,j}, v_{t,j} \rangle
= \frac{1}{i} \big ( (i-1) b_{t,i} - \frac{1}{\rx L} \langle g_{t,i}, v_{t,i} \rangle \big )$ 
\vspace{.1cm}
\State ~~ 4c. Update $p_{t,i+1} = b_{t,i+1} u_{t,i+1}$
\vspace{.1cm}
\State ~ {\bfseries End}
\vspace{.1cm}
\State {\bfseries End}
\vspace{.1cm}
\State {\bfseries Return} $(w_{t,i})_{t = 1, i = 1}^{T,n}$ and $(\theta_{k})_{k =1}^{Tn}$
%\vspace{.1cm}
%\State \quad \quad \ds{In statistical MTL setting: $(\bar w_t)_{t = 1}^T$, $\bar w_t = \sum_{i = 1}^n w_{t,i}$}
\end{algorithmic}}
\end{algorithm}

Motivated by the conclusion in the previous section, we now propose and 
analyze a parameter-free method to infer a good bias vector shared across 
the tasks from an increasing sequence of $T$ datasets ${\bf \Zn} = 
(\Zn_t)_{t = 1}^T$, $\Zn_t = (z_{t,i})_{i = 1}^n = (x_{t,i}, y_{t,i})_{i = 1}^n$. 
The method is reported in \cref{algorithm_free_bias_continuous_setting}
and it updates the bias vector $\theta$ after each point. This characteristic 
inspires us to refer to \cref{algorithm_free_bias_continuous_setting}
as `aggressive', in order to distinguish it from the `lazy' version 
reported in \cref{algorithm_free_bias_continuous_setting_lazy}
in \cref{lazy_version}, where, the bias vector is updated only at
the end of each task. 

The idea motivating the design of our method is similar to 
the idea in \cref{motivation_bias} of parametrizing the vector associated 
to each task as in \cref{parametrization_single_t}--\cref{parametrization_single_t2}, 
according to a common bias vector $\theta \in \Real^d$.
But, now, we also parametrize the bias vector $\theta$ w.r.t. the 
zero-centered coordinate system:
\begin{equation} \label{parametrization_meta}
\theta = P V 
\end{equation}
\begin{equation} \label{parametrization_meta_2}
P = \| \theta \| \in \Real \quad \quad 
V = \frac{\theta}{\| \theta \|} \in \B(0,1).
\end{equation}
%in order to infer it from the data.
\cref{algorithm_free_bias_continuous_setting} exploits the \emph{joint} 
parametrization above and it uses the datasets ${\bf \Zn}$ it receives,
in order to incrementally learn
\begin{itemize}
\item (for any task $t$) the direction $v_t \in \B(0,1)$ of the vector $w_t - \theta$ by applying \cref{proj_sub_alg} on the ball $\B(0,1)$ to the subgradient vectors $(g_{t,i})_{i = 1}^n$, where $g_{t,i} \in \partial \ell_{t,i} ( \langle x_{t,i}, \cdot \rangle)(w_{t,i})$, with $w_{t,i}$ the current within-task iteration returned by the algorithm (steps $3a$-$b$),
\item (for any task $t$) the magnitude $p_t$ of the vector $w_t - \theta$, by 
applying \cref{coin_betting_alg} to the scalars $(\langle g_{t,i}, v_{t,i} \rangle)_{i = 1}^n$, with $v_{t,i}$ the current within-task direction w.r.t. the current bias vector $\theta_{k(t,i)}$ estimated by the algorithm and $k(t,i) = (t-1)n + i$ the total number of points seen up to that moment (steps $4a$--$c$),
\item the direction $V \in \B(0,1)$ of the vector $\theta$, by applying \cref{proj_sub_alg} on the ball $\B(0,1)$ to the vectors $(g_{t,i})_{t,i = 1}^{T,n}$ 
(steps $3A$-$B$),
\item the magnitude $P$ of the vector $\theta$, by applying \cref{coin_betting_alg} to the scalars $(\langle g_{t,i}, V_{k(t,i)} \rangle)_{t = 1, i = 1}^{T,n}$, with $V_{k(t,i)}$ the current meta-direction estimated by the algorithm (steps $4A$-$C$).
\end{itemize} 

The performance of \cref{algorithm_free_bias_continuous_setting}
is analyzed in the following theorem in which we give an across-tasks
regret bound for the method. The complete proof of the statement is 
reported in \cref{proof_regret_across_meta}.

\begin{restatable}[Across-Tasks Regret Bound for \cref{algorithm_free_bias_continuous_setting}]{theorem}{RegretAcrossMeta}
\label{regret_across_meta}
Let \cref{ass1} hold.
Consider $T$ datasets ${\bf \Zn} = (\Zn_t)_{t = 1}^T$, $\Zn_t = (z_{t,i})_{i = 1}^n = (x_{t,i}, y_{t,i})_{i = 1}^n$ deriving from $T$ different tasks. 
Let $(w_{t,i})_{t = 1, i = 1}^{T,n}$ be the iterates generated by \cref{algorithm_free_bias_continuous_setting} over these
datasets ${\bf \Zn}$. Then, for any sequence $(w_t)_{t = 1}^T$,
$w_t \in \Real^d$ and any $\theta \in \Real^d$,
\begin{equation}
\begin{split}
& \sum_{t = 1}^T \sum_{i = 1}^n \ell_{t,i}( \langle x_{t,i}, w_{t,i} \rangle)
- \ell_{t,i}( \langle x_{t,i}, w_t \rangle) \\
& \quad \le \sum_{t = 1}^T \sum_{i = 1}^n \big \langle g_{t,i}, w_{t,i} - w_t \big \rangle \le \text{A} + \text{B},
\end{split}
\end{equation}
where, A is the term in \cref{termA_erased} with $\theta$,
\begin{equation*}
\text{B} = \rx L \Bigg[ E + \Bigg ( 2 \sqrt{2} + \Phi \Big( E^{-1} \| \theta \| n T \Big) \Bigg) \| \theta \| \sqrt{n T} \Bigg ]
\end{equation*}
and $\Phi(\cdot)$ is defined as in \cref{coin_betting_alg_regret}.
\end{restatable}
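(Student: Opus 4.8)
The plan is to decompose the across-tasks linear regret using the \emph{joint} magnitude--direction parametrization that the algorithm is built on, in such a way that the problem splits into a ``within-task'' part and a ``meta'' part that can each be controlled by the single-task tools already available. First I would fix an arbitrary competitor sequence $(w_t)_{t=1}^T$ and an arbitrary bias $\theta\in\Real^d$, and write $w_t = p_t v_t + \theta$ and $\theta = PV$ as in \cref{parametrization_single_t}--\cref{parametrization_single_t2} and \cref{parametrization_meta}--\cref{parametrization_meta_2}. The starting point is the first inequality, which is immediate from convexity and the definition of the subgradients $g_{t,i}$ (exactly as in the proof sketch of \cref{regret_single}). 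So the whole task reduces to bounding $\sum_{t,i}\langle g_{t,i}, w_{t,i}-w_t\rangle$ by $\text{A}+\text{B}$.

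The key algebraic step is to insert the running bias $\theta_k$ (with $k=k(t,i)$) and split
\begin{equation*}
w_{t,i}-w_t = \bigl(w_{t,i}-(w_t-\theta)-\theta_k\bigr) + \bigl(\theta_k - \theta\bigr),
\end{equation*}
where $w_{t,i}-\theta_k = p_{t,i}v_{t,i}$ by the within-vector update. Pairing with $g_{t,i}$ and summing, the first group becomes $\sum_{t,i}\langle g_{t,i},\, p_{t,i}v_{t,i} - (w_t-\theta)\rangle$, which, \emph{task by task}, is exactly the quantity controlled in the proof of \cref{regret_single}: it is the within-task linear regret of \cref{algorithm_free_bias} run with the (time-varying, but that does not matter for this term) decomposition relative to $\theta$, against competitor $w_t$. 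Here the crucial observation is that steps $3a$--$b$ and $4a$--$c$ of \cref{algorithm_free_bias_continuous_setting} are \emph{literally} \cref{proj_sub_alg} and \cref{coin_betting_alg} applied within each task exactly as in \cref{algorithm_free_bias}, with the feedback $g_{t,i}$ and $\langle g_{t,i},v_{t,i}\rangle$ respectively; hence the magnitude--direction argument from the sketch of \cref{regret_single} goes through verbatim and, after summing over $t$ and invoking \cref{ass1} (so $\|g_{t,i}\|\le \rx L$ and $|\langle g_{t,i},v_{t,i}\rangle|\le \rx L$), yields exactly term $\text{A}$, i.e. the right-hand side of \cref{termA_erased} with this $\theta$.

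The second group, $\sum_{t,i}\langle g_{t,i},\, \theta_k - \theta\rangle$, is the \emph{meta} regret: by $\theta_k = P_k V_k$ and $\theta = PV$, this is precisely the linear regret, over the flattened sequence of $Tn$ rounds indexed by $k$, of the parameter-free algorithm \cref{algorithm_free_bias} (with $\theta = 0$, initial wealth $E$) applied to the feedback vectors $(g_{t,i})$ — again because steps $3A$--$B$ and $4A$--$C$ are exactly \cref{proj_sub_alg} on $\B(0,1)$ and \cref{coin_betting_alg} with scalars $\langle g_{t,i},V_k\rangle$. Applying the $\theta=0$ case of \cref{regret_single} (or equivalently re-running the magnitude--direction bound) with horizon $nT$, competitor $\theta$, and $\|g_{t,i}\|\le \rx L$ gives exactly term $\text{B}$. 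Adding the two pieces finishes the proof.

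The main obstacle is bookkeeping rather than any genuine difficulty: one must be careful that the feedback driving the within-task coin-betting and the meta coin-betting is the \emph{same} subgradient $g_{t,i}$, evaluated at the \emph{actual} played point $w_{t,i}=p_{t,i}v_{t,i}+\theta_k$, so that neither the within-task regret bound nor the meta regret bound ``sees'' the other component — this is exactly the structural reason the two regrets decouple additively, and it is worth stating explicitly. A secondary point to check is that the reset of the within-task variables $b_{t,1},u_{t,1},v_{t,1}$ at the start of each task is what lets us apply the single-task bound (with its $\sqrt{n}$ and the per-task $\Phi(e^{-1}\|w_t-\theta\|n)$ factors) independently to each $t$ before summing, whereas the meta variables $B_k,U_k,V_k$ are \emph{not} reset, which is precisely why the meta term carries $\sqrt{nT}$ and a single $\Phi(E^{-1}\|\theta\|nT)$.
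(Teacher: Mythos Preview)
Your proposal is correct and follows essentially the same approach as the paper: the paper performs exactly the algebraic split $w_{t,i}-w_t = (p_{t,i}v_{t,i}-p_tv_t)+(\theta_{k(t,i)}-\theta)$ you describe, then decomposes each piece via the magnitude--direction trick into four terms bounded by \cref{coin_betting_alg_regret} and \cref{proj_sub_alg_regret}. The only cosmetic difference is that you package the within-task and meta pieces as two direct invocations of \cref{regret_single} (the latter with bias $0$, horizon $nT$, competitor $\theta$), whereas the paper unfolds each into its two constituent regrets explicitly; since the proof of \cref{regret_single} is precisely that unfolding, the arguments are the same.
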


\begin{proofsketch}
While the first inequality is due to the convexity of the loss function (see \cref{ass1}) and the definition
of the subgradients $(g_{t,i})_{t = 1, i = 1}^{T,n}$, the proof of the 
second inequality is based, also in this case, on the joint magnitude-direction 
decomposition motivating the design of the algorithm and explained above.
Specifically, by definition of $w_{t,i}$ and $\theta_k$ in \cref{algorithm_free_bias_continuous_setting} and the rewriting of $\theta$ as in \cref{parametrization_meta}--\cref{parametrization_meta_2} and $w_t$ as
in \cref{parametrization_single_t}--\cref{parametrization_single_t2},
one can show that the linear regret can be bounded by four 
contributions as follows:
\begin{equation*}
\begin{split} 
\sum_{t = 1}^T \sum_{i = 1}^n \big \langle g_{t,i}, w_{t,i} - w_t \big \rangle 
\le & \sum_{t = 1}^T \Big ( R_t(p_t) + p_t R_t(v_t) \Big ) \\
& + R(P) + P R(V),
\end{split}
\end{equation*}
where, $p_t$ and $v_t$ as in \cref{parametrization_single_t2},
$P$ and $V$ as in \cref{parametrization_meta_2}, 
\begin{equation}
R_t(p_t) = \sum_{i = 1}^n \big \langle g_{t,i}, v_{t,i} \big \rangle ( p_{t,i} - p_t )
\end{equation}
\begin{equation}
R_t(v_t) = \sum_{i = 1}^n \big \langle g_{t,i}, v_{t,i} - v_t \big \rangle
\end{equation}
\begin{equation}
R(P) = \sum_{t = 1}^T \sum_{i = 1}^n \big \langle g_{t,i}, V_{k(t,i)} \big \rangle \bigl( P_{k(t,i)} - P \bigr)
\end{equation}
\begin{equation}
R( V) = \sum_{t = 1}^T \sum_{i = 1}^n \big \langle g_{t,i}, V_{k(t,i)} 
- V \big \rangle
\end{equation}
coincide, respectively, with the within-task regret of the magnitudes $(p_{t,i})_{i = 1}^n$ generated by \cref{coin_betting_alg} on the task $t$, the within-task regret of the directions $(v_{t,i})_{i = 1}^n$ generated by \cref{proj_sub_alg} on the task $t$,
the meta-regret of the magnitudes $(P_{k})_{k = 1}^K$ generated by \cref{coin_betting_alg} and the meta-regret of the directions $(V_k)_{k = 1}^K$ 
generated by \cref{proj_sub_alg}. The statement derives from exploiting \cref{ass1} 
in order to bound the four terms by \cref{coin_betting_alg_regret} or \cref{proj_sub_alg_regret}, accordingly.
\end{proofsketch}

We note that the bound in \cref{regret_across_meta} is composed of two main terms: while the term $A$ coincides with the bound in \cref{termA_erased} for the use of a pre-fixed bias $\theta$ across all the tasks, the term $B$ captures the price we pay to estimate the bias from data. Notice that this additional term goes as $\mathcal{O}(\sqrt{n T})$ and, as a consequence, it is negligible when added to the first term going as $\mathcal{O}(\sqrt{n})$. 
In particular, specifying the bound in \cref{regret_across_meta} to the bias
$\theta$ in \cref{oracle} (the average of the target tasks' weight vectors), 
we can conclude that our method is able to match the performance of this best bias 
in hindsight, when the number of tasks is sufficiently large. On the other hand, by 
taking $\theta = 0 \in \Real^d$ in \cref{regret_across_meta}, we retrieve the bound 
in \cref{regret_across} for independent task learning (ITL). Hence, in the 
worst-case scenario of no low-variance tasks, our method performs, at least, as ITL,
without negative transfer effect. 
We also observe that the additional term due to the estimation of the bias from the data is faster in comparison to the additional term going as $\mathcal{O}(n \sqrt{T})$ paid in benchmark works for growing tasks' sequences requiring hyper-parameter tuning, such as \cite{denevi2019online}. This is essentially due to the fact that in our method we are updating the bias more frequently: after each point (hence $nT$ updates) instead of only at the end of each task (hence $T$ updates) as done in \cite{denevi2019online}. We finally notice that the bound in \cref{regret_across_meta} present a similar rate to the mistakes' bound in \cite[Cor. 4]{cavallanti2010linear} for a Perceptron-based algorithm. However, the method in \cite{cavallanti2010linear} works only for finite sequences of tasks and, again, it requires hyper-parameter tuning.
%\gd{What happens when the variance is 0 to the bound and the method?
%Comparison to the Srebo's method concatenating all the points?}

%--------------------------------------------------------------------------------------------------------

\section{STATISTICAL MTL SETTING}
\label{statistical_setting}

In this section we show how \cref{algorithm_free_bias_continuous_setting}
can be adapted to a multi-task learning statistical setting. 
Specifically, following the framework outlined in \cite{caruana1998multitask}, we assume that, for any $t \in \{1, \dots, T \}$, the within-task dataset $\data_t$ 
is an independently identically distributed (i.i.d.) sample from a distribution (task) 
$\task_t$. 

In this case, for any task $t \in \{1, \dots, T \}$, we consider the estimator 
$\bar w_t = \frac{1}{n} \sum_{i = 1}^n w_{t,i}$ given by the average of the 
iterations computed by \cref{algorithm_free_bias_continuous_setting} associated 
to the task $t$. We wish to study the performance of such estimators. Formally, 
for any task $\task_t$, we require that the corresponding true 
risk $\cR_{\task_t}(w) = \Exp_{(x,y) \sim \task_t} \ell ( \langle x, w \rangle, y )$ 
admits minimizers over the entire space $\Real^d$ and we denote by $\wmu_t$ 
the minimum norm one. With these ingredients, we introduce the multi-task 
{\rm oracle} $\ee_{{\rm MTL}}^* = \frac{1}{T} \sum_{t = 1}^T \cR_{\task_t}(\wmu_t)$, 
and, introducing the \emph{average multi-task risk} of the estimators 
$(\bar{w}_t)_{t = 1}^T$:
\begin{equation} \label{MTL_risk}
\E_{{\rm MTL}} \big ( (\bar{w}_t)_{t = 1}^T \big) = \frac{1}{T} \sum_{t = 1}^T \cR_{\task_t} ( \bar{w}_t ),
\end{equation}
we give a bound on it w.r.t. the oracle $\ee_{{\rm MTL}}^*$. This is described in the following theorem.

\begin{restatable}[Multi-Task Risk Bound for \cref{algorithm_free_bias_continuous_setting}]{theoremshortref}{ExcessRiskBound} 
\label{excess_risk_bound}
Let the same assumptions in \cref{regret_across_meta} hold in the i.i.d. multi-task statistical setting. Let $\E_{{\rm MTL}} \big ( (\bar{w}_t)_{t = 1}^T \big)$ be as in \cref{MTL_risk}, namely, the average multi-task risk of the 
estimators $(\bar{w}_t)_{t = 1}^T$, where $\bar w_t$ is the average of the 
iterates computed by \cref{algorithm_free_bias_continuous_setting} associated to 
the task $t$. Then, for any $\theta \in \Real^d$, 
in expectation w.r.t. the sampling of the datasets ${\bf Z} = (\Zn_t)_{t = 1}^T$,
\begin{equation}
\Exp_{\bf Z}~\E_{{\rm MTL}} \big ( (\bar{w}_t)_{t = 1}^T \big) - \ee_{{\rm MTL}}^* \le \frac{1}{n T} \bigl( \text{A} + \text{B} \bigr)
\end{equation}
where,
\begin{equation*} %\label{termA_stat}
\text{A} = \rx L \Bigg [ e T + \Big (2 \sqrt{2} {\rm Var}_{{\rm MTL}}(\theta) 
+ \widehat {\rm Var}_{{\rm MTL}}(\theta) \Big ) \sqrt{n} T \Bigg ] 
\end{equation*}
\begin{equation}
{\rm Var}_{{\rm MTL}}(\theta) = \frac{1}{T} \sum_{t = 1}^T \| w_{\task_t} - \theta \|
\end{equation}
\begin{equation*}
\widehat {\rm Var}_{{\rm MTL}}(\theta) = \frac{1}{T} \sum_{t = 1}^T
\Phi \Big( e^{-1} \| \hat w_{\task_t} - \theta \| n \Big) \| w_{\task_t} - \theta \|
\end{equation*}
%\begin{equation*}
%\text{B} = \rx L \Bigg[ E + \Bigg ( 2 \sqrt{2} + \Phi \Big( E^{-1} \| \theta \| n T \Big) \Bigg) \| \theta \| \sqrt{n T} \Bigg ]
%\end{equation*}
$\Phi(\cdot)$ is defined as in \cref{coin_betting_alg_regret}
and B is the term in \cref{regret_across_meta}.
\end{restatable}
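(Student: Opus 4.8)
The plan is to convert the across-tasks regret bound from \cref{regret_across_meta} into a statistical bound by the standard online-to-batch conversion argument, applied task-by-task. First I would recall that for each task $t$, the iterates $(w_{t,i})_{i=1}^n$ are produced by an online procedure whose losses are convex in the linear prediction (by \cref{ass1}), so Jensen's inequality gives $\cR_{\task_t}(\bar w_t) \le \frac{1}{n}\sum_{i=1}^n \cR_{\task_t}(w_{t,i})$, where $\bar w_t = \frac1n\sum_{i=1}^n w_{t,i}$. Averaging over $t$, this yields $\E_{\rm MTL}\big((\bar w_t)_{t=1}^T\big) \le \frac{1}{nT}\sum_{t=1}^T\sum_{i=1}^n \cR_{\task_t}(w_{t,i})$.

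Next I would take expectation over the sampling of ${\bf Z}$. The key point is that $w_{t,i}$ depends only on the datapoints observed strictly before step $k(t,i)$ — i.e., on the first $(t-1)n + (i-1)$ points — and the $i$-th point of task $t$ is drawn i.i.d. from $\task_t$ independently of those. Hence $\Exp_{\bf Z}\,\ell_{t,i}(\langle x_{t,i}, w_{t,i}\rangle) = \Exp_{\bf Z}\,\cR_{\task_t}(w_{t,i})$, using the tower rule by conditioning on the past. Applying the same identity trivially to the comparator terms (since $w_{\task_t}$ is deterministic) gives, for any fixed sequence of comparators, $\Exp_{\bf Z}\sum_{t,i}\big(\cR_{\task_t}(w_{t,i}) - \cR_{\task_t}(w_{\task_t})\big) = \Exp_{\bf Z}\sum_{t,i}\big(\ell_{t,i}(\langle x_{t,i},w_{t,i}\rangle) - \ell_{t,i}(\langle x_{t,i},w_{\task_t}\rangle)\big)$. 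Combining with the previous display and the definition of $\ee_{\rm MTL}^* = \frac1T\sum_t \cR_{\task_t}(\wmu_t)$, we obtain
\[
\Exp_{\bf Z}\,\E_{\rm MTL}\big((\bar w_t)_{t=1}^T\big) - \ee_{\rm MTL}^* \le \frac{1}{nT}\,\Exp_{\bf Z}\sum_{t=1}^T\sum_{i=1}^n \big(\ell_{t,i}(\langle x_{t,i},w_{t,i}\rangle) - \ell_{t,i}(\langle x_{t,i},\wmu_t\rangle)\big).
\]

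Now I would apply \cref{regret_across_meta} with the comparator choice $w_t = \wmu_t$ (the minimum-norm risk minimizer of task $t$), which bounds the right-hand sum inside the expectation by $\text{A} + \text{B}$, where A is the term from \cref{termA_erased} evaluated at $w_t = \wmu_t$ and at an arbitrary fixed $\theta$, and B is as stated. The only subtlety is that A in \cref{regret_across_meta} contains the data-dependent quantity $\widehat{\rm Var}(\theta)$ through $\Phi(e^{-1}\|w_t - \theta\| n)$; with the substitution $w_t = \wmu_t$ this matches $\widehat{\rm Var}_{\rm MTL}(\theta)$ as defined in the statement (note the statement writes $\Phi(e^{-1}\|\hat w_{\task_t} - \theta\|n)$ — I would check whether this is a typo for $\wmu_t$, since the regret bound produces the norm of the comparator, namely $\|\wmu_t - \theta\|$; I will assume the intended quantity is $\widehat{\rm Var}_{\rm MTL}(\theta) = \frac1T\sum_t \Phi(e^{-1}\|\wmu_t-\theta\|n)\|\wmu_t-\theta\|$). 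Since B is deterministic and A (after the substitution) is deterministic, taking the expectation is trivial and dividing by $nT$ gives exactly the claimed bound.

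The main obstacle — and the only place requiring care — is the measurability/independence argument in the online-to-batch step: one must verify that $w_{t,i}$ is indeed a function of the past data only, which holds because \cref{algorithm_free_bias_continuous_setting} computes $w_{t,i} = p_{t,i}v_{t,i} + \theta_{k(t,i)}$ from quantities ($p_{t,i}, v_{t,i}, P_{k(t,i)}, V_{k(t,i)}$) updated using $g_{1},\dots,g_{k(t,i)-1}$ only. Everything else is a routine combination of Jensen's inequality, the tower property of conditional expectation, and the already-established regret bound.
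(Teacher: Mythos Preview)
Your proposal is correct and follows essentially the same approach as the paper: the paper isolates the online-to-batch step as a separate proposition (\cref{online_to_batch}), whose proof uses exactly the Jensen-plus-tower-property argument you outline, and then applies \cref{regret_across_meta} with $w_t = w_{\task_t}$. Your observation that $\hat w_{\task_t}$ in the statement should read $w_{\task_t}$ is also correct --- it is a typo, since the regret bound produces $\|w_{\task_t}-\theta\|$ inside $\Phi$.
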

The bound we have obtained above for our parameter-free method is in line 
with previous batch multi-task learning literature \cite{evgeniou2005learning,maurer2006} requiring 
tuning of hyper-parameters. Regarding online benchmarks, it is unclear
whether the online method proposed in \cite{cavallanti2010linear} can be adapted 
also to a statistical setting.
We observe that the bound above is composed by the expectation of the terms comparing in \cref{regret_across_meta} evaluated at the target vectors $(w_{\task_t})_{t = 1}^T$. This automatically derives from the fact that the proof of the statement exploits the across-tasks regret bound given in \cref{regret_across_meta} for our meta-learning procedure and, as described in the following proposition, online-to-batch conversion arguments \cite{cesa2004generalization,littlestone2014line}. The statement reported below is used by the authors in \cite{eichner2019semi} in order to address the issue of 
applying stochastic subgradient descent to a sequence of semy-cyclic datapoints by 
plurastic (multi-task) point of view. We report the proof in \cref{online_to_batch_proof} for completeness.
 
\begin{restatable}[Online-To-Batch Conversion for  \cref{algorithm_free_bias_continuous_setting}, {\cite[Thm. $3$]{eichner2019semi}}]{proposition}{OnlineToBatch} 
\label{online_to_batch}
Under the same assumptions in \cref{excess_risk_bound}, the following relation holds
\begin{equation*}
\begin{split}
& \Exp_{\bf Z}~\E_{{\rm MTL}} \big ( (\bar{w}_t)_{t = 1}^T \big) - \ee_{{\rm MTL}}^* \le \\
& \quad \Exp_{\bf Z}~ \Bigg [ \frac{1}{nT} \sum_{t = 1}^T \sum_{i = 1}^n \ell_{t,i}( \langle x_{t,i}, w_{t,i} \rangle) - \ell_{t,i}( \langle x_{t,i}, w_{\task_t} \rangle) \Bigg ].
\end{split}
\end{equation*}
\end{restatable}

We now have all the ingredients necessary to prove \cref{excess_risk_bound}.

\begin{proofGD}{\bf of \cref{excess_risk_bound}.}
The desired statement derives from applying on the right side of \cref{online_to_batch}
the across-tasks regret bound in \cref{regret_across_meta} specified to the 
sequence of target vectors $(w_{\task_t})_{t = 1}^T$.
\end{proofGD}

Looking at the proof in \cref{online_to_batch_proof}, the reader can 
notice that the online-to-batch statement in \cref{online_to_batch}
applies also to the `lazy' version of our method reported in \cref{algorithm_free_bias_continuous_setting_lazy} in \cref{lazy_version}.
This allows us to convert also the lazy variant into a statistical 
(sub-optimal) multi-task learning method with a slower rate. 
On the other hand, we did not manage to convert the aggressive 
variant of our method into a statistical meta-learning method.
This issue makes us wondering whether faster rates going as 
$\sqrt{n T}$ as in the multi-task learning setting are achievable 
also in the meta-learning setting for the second term.

%--------------------------------------------------------------------------------------------------------

\section{EXPERIMENTS}
\label{experiments}

In this section we test the numerical performance of our method\footnote{Code to reproduce the experiments is available at {\em https://github.com/dstamos/Parameter-free-MTL}}.
Following the same data-generation procedure described
in \cite{denevi2019learning}, we generated an environment 
of $T = 400$ regression tasks with low variance. Specifically, for any 
task $\task$, we sampled the corresponding ground truth vector $w_\task$ 
from a Gaussian distribution with mean given by the vector 
$\theta^* \in \mathbb{R}^d$ with $d = 10$ and all components equal
to $4$ and standard deviation $1$. After this, we generated the corresponding 
dataset $(x_i,y_i)_{i=1}^n$, $x_i\in\mathbb{R}^d$ with $n = 25$. We sampled 
the inputs uniformly on the unit sphere and we generated the labels according to the equation $y = \langle x,\wmu \rangle + \epsilon$, where the noise $\epsilon$ was sampled from a zero-mean Gaussian distribution, with standard deviation chosen in 
order to have signal-to-noise ratio $1$. 

\begin{figure}[t]
\begin{minipage}[t]{0.49\textwidth}  
\centering
\includegraphics[width=1\textwidth]{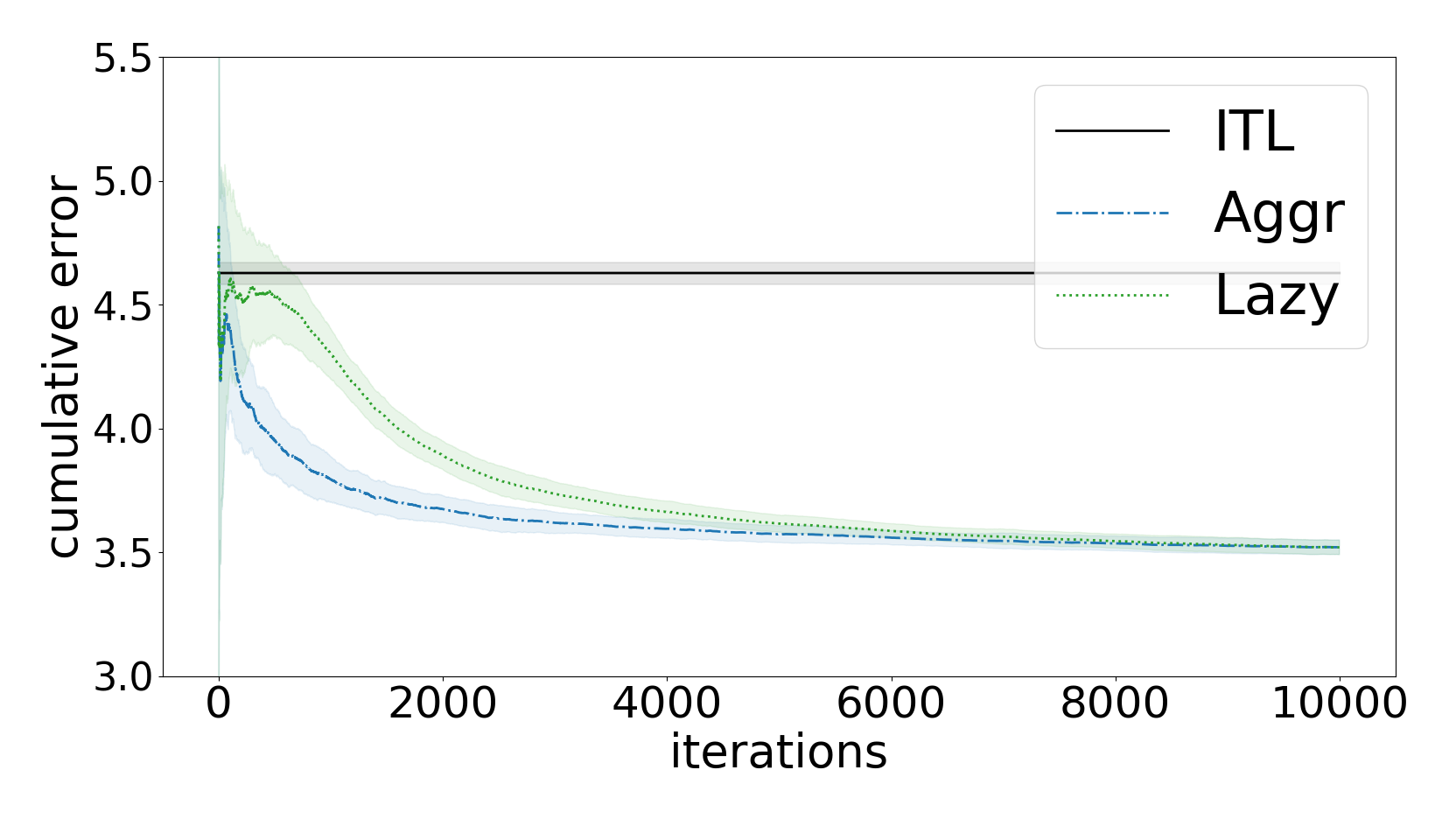}
%\vspace{-.3cm}
\end{minipage}
\begin{minipage}[t]{0.49\textwidth}
\centering
\includegraphics[width=1\textwidth]{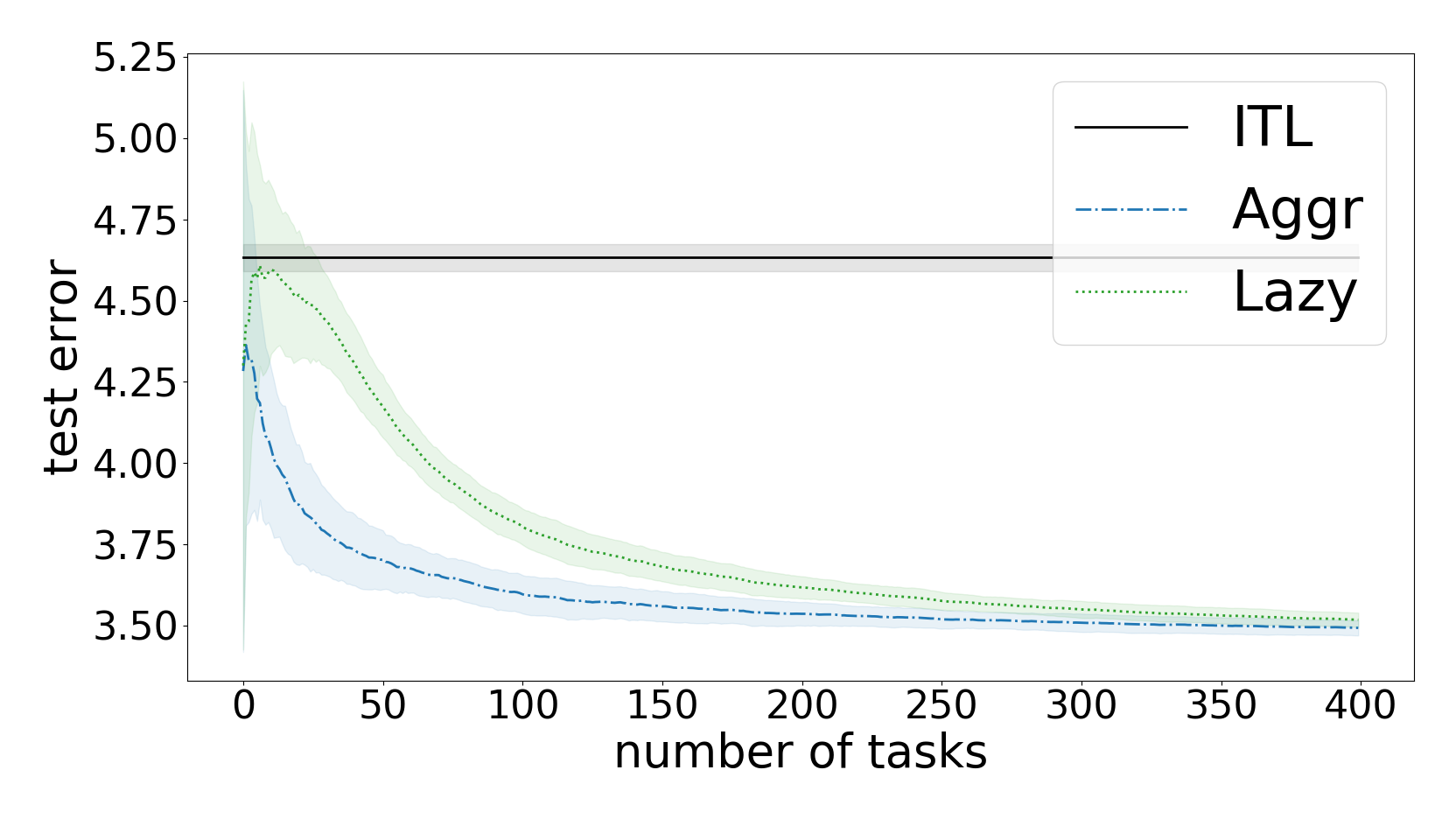}
%\vspace{-.3cm}
\end{minipage}
\vspace{-.5cm}
\caption{Average performance (over $30$ seeds) 
of different methods w.r.t. an increasing number of 
iterations or tasks on synthetic data. Average 
across-tasks cumulative error (top), average multi-task 
test error (bottom). \label{fig_exps_real_synth}}
\vspace{-.5cm}
\end{figure}

In this setting, we compared the performance of independent task learning 
(ITL) (running the unbiased variant of \cref{algorithm_free_bias} over each task), 
%the oracle (running \cref{algorithm_free_bias} with the continual variant 
%$\theta^* \in \Real^d$ of the bias in \cref{oracle} over each task), 
our aggressive method in \cref{algorithm_free_bias_continuous_setting} 
(Aggr) and its lazy version in \cref{algorithm_free_bias_continuous_setting_lazy} 
in \cref{lazy_version} (Lazy). 
%We added also to the comparison
%the method applying the unbiased version of \cref{algorithm_free_bias} 
%across the entire ordered stream of points $(z_k)_{k = 1}^K$,
%where, we recall that $k = k(t,i) = (t-1)n + i$ and $K = Tn$.

In the experiments below, we noticed that the variants of our methods estimating the magnitude by the refined coin betting algorithm in \cite[Alg. $1$]{cutkosky2018black}
returned a more readable plot 
%converged faster 
w.r.t. the variants described in our theory using the KT algorithm in
\cref{coin_betting_alg}. For this reason, we report 
below the results obtained using this more refined variant. 

In \cref{fig_exps_real_synth} (top) we report the average 
across-tasks cumulative error for all the methods w.r.t. to an 
increasing number of datapoints/iterations. In \cref{fig_exps_real_synth} 
(bottom) we report their (statistical) average multi-task test errors 
for an increasing number of tasks. We measured the performance by the 
absolute loss and we set the initial wealths in our methods equal to $1$, for 
both the within-task and the across-tasks algorithms. The results we got are in 
agreement with the theory. Our approaches lead to a substantial benefits 
w.r.t. ITL and they converge to the oracle (the algorithm with the best bias 
in hindsight) as the number of the observed datapoints/tasks increases. 
Moreover, coherently with our bounds, we observe that, the aggressive variant 
of our method presents faster rates w.r.t. its lazy counterpart.
%\gd{Finally, we observe that the naive method ... outperforms both our 
%methods. This is not surprising, since such a method enjoys a faster
%convergence rate proportional to $\mathcal{O}(\sqrt{n T})$.}

Because of lack of space, in \cref{exp_details}, we report additional 
experiments investigating the sensitivity of our parameter-free methods 
w.r.t. to the initialization of the wealths and showing the effectiveness of 
our methods on two real datasets (the Lenk \cite{lenk1996hierarchical,Andrew} 
and the Schools \cite{argyriou2008convex} datasets). In such a case, we 
will report for completeness both the refined and the basic variant of our methods.

%------------------------------------------------------------------------------------------------------

\section{CONCLUSION}
\label{conclusion}

We developed a parameter-free method that learns a common bias 
shared by a growing sequence of tasks. The advantage of
our method in comparison to solving the tasks independently manifests itself 
when the variance of target tasks' weight vectors is sufficiently small. Our 
method is originally introduced in the non-statistical setting and it can be applied 
into an aggressive or lazy version. The aggressive version enjoys faster rates
and it can be converted into a statistical multi-task learning method, while, 
the lazy method recovers standard rates, but it can be converted into
a statistical meta-learning method, able to generalize across the tasks.

In the future it would valuable to investigate whether other multi-task 
learning methods based on different metrics and addressing different 
types of tasks' relatedness (e.g. those based on a shared low dimensional 
representation \cite{denevi2018incremental,tripuraneni2020provable} 
or graph regularization \cite{cavallanti2010linear,evgeniou2005learning}) 
can be made parameter-free as well. 
Moreover, it would be interesting to understand if our analysis allows  
more cycles over the data as in \cite{eichner2019semi} and if this can be 
beneficial. Finally, we also wonder whether our parameter-free approach 
can be beneficial to recent meta-learning frameworks \cite{cella2020meta} 
dealing with partial feedback scenarios \cite{altschuler2018online}.
%Finally, we wonder whether faster rates going as $\sqrt{n T}$ as in the 
%multi-task setting are achievable also in the meta-learning setting.

%\gd{Do we want to observe that, by concatenating all weight vectors in-.035t a unique 
%larger vector, we can make a more general parameter-free method but only for 
%MTL (not LTL)?}

%-------------------------------------------------------------------------------------------------------

\subsubsection*{Acknowledgements}

This work was supported in part by SAP SE and by EPSRC Grant N. EP/P009069/1.

\bibliographystyle{abbrv}

%--------------------------------------------------------------------------------------------------------

\onecolumn
\appendix

\section*{APPENDIX}

The appendix is structured in the following way. We start from 
reporting the proof of \cref{regret_single} in \cref{proof_regret_single}.
After that, we give the proof of \cref{regret_across_meta} and 
\cref{online_to_batch} in \cref{proof_regret_across_meta} and
\cref{online_to_batch_proof}, respectively. Then, in \cref{lazy_version},
we present and analyze the lazy variant of our method giving rise
to a parameter-free statistical meta-learning method. Finally, in 
\cref{exp_details}, we report additional experiments we omitted in 
the main body because of lack of space.

%--------------------------------------------------------------------------------------------------------

\section{PROOF OF \cref{regret_single}}
\label{proof_regret_single}

\RegretSingle*

\begin{proof}
We start by observing that the first inequality in the statement holds by convexity
of $\ell_i(\langle x_i, \cdot \rangle)$ (see \cref{ass1}) and the fact that, by construction, $g_i \in \partial
\ell_i(\langle x_i, \cdot \rangle)(w_i)$. In order to show the second inequality, we just
proceed as in the proof of \cite[Thm. $2$]{cutkosky2018black}. Specifically, by definition of $w_i$ in \cref{algorithm_free_bias} and the rewriting of $w$ as in  \cref{parametrization_single}--\cref{parametrization_single_2}, we can write
\begin{equation} \label{initial_decomposition_single}
\begin{split}
\sum_{i = 1}^n \big \langle g_i, w_i - w \big \rangle
& = \sum_{i = 1}^n \big \langle g_i, p_i v_i + \theta - (p v + \theta) \big \rangle \\
& = \sum_{i = 1}^n \big \langle g_i, p_i v_i - p v \big \rangle \\
& = \sum_{i = 1}^n \big \langle g_i, p_i v_i - p v \big \rangle 
\pm \big \langle g_i, p v_i \big \rangle \\
& = \sum_{i = 1}^n \big \langle g_i, v_i \big \rangle ( p_i - p ) 
+ p \sum_{i = 1}^n \big \langle g_i, v_i - v \big \rangle.
\end{split}
\end{equation}
In order to get the desired statement, we bound the two terms above as follows.
We start from observing that the first term above coincides with the regret of the 
sequence of scalars $(p_i)_{i = 1}^n$ generated by \cref{coin_betting_alg} aiming 
at inferring the magnitude $p$ from the sequence of scalars $(\langle g_i, v_i \rangle)_{i = 1}^n$. We also notice that, since by construction $\| v_i \| \le 1$ and since 
the Lipschitz and bounded inputs assumption in \cref{ass1} implies $\| g_i \| 
= | s_i | \| x_i \| \le L \rx$ (see \cite[Lemma $14.7$]{shalev2014understanding}), 
then we have
\begin{equation}
\big | \langle g_i, v_i \rangle \big | \le \| g_i \| \| v_i \| \le \rx L.
\end{equation}
As a consequence, recalling the initial wealth $e  > 0$ of the algorithm, 
by \cref{coin_betting_alg_regret}, we have, 
\begin{equation} \label{first_term_single}
\sum_{i = 1}^n \big \langle g_i, v_i \big \rangle \bigl( p_i - p \bigr)
 \le \rx L \Big [ e + \Phi \big ( e^{-1} p n \big ) p \sqrt{n} \Big ].
\end{equation}
Regarding the second term, we observe that the quantity
$\sum_{i = 1}^n \big \langle g_i, v_i - v \big \rangle$
coincides with the regret of the sequence $(v_i)_{i = 1}^n$ generated by 
\cref{proj_sub_alg} on $\B(0,1)$ aiming at inferring the direction $v$ 
from the sequence of vectors $(g_i)_{i = 1}^n$, where, as observed above,
by the Lipschitz and bounded inputs assumption in \cref{ass1}, we have
$\| g_i \| \le \rx L$.
As a consequence, by \cref{proj_sub_alg_regret}, we have, 
\begin{equation} \label{second_term_single}
\sum_{i = 1}^n \big \langle g_i, v_i - v \big \rangle
\le \rx L 2 \sqrt{2} \sqrt{n}.
\end{equation}
Substituting \cref{first_term_single} and \cref{second_term_single} into \cref{initial_decomposition_single}, we get the desired statement, recalling
that $p = \| w - \theta \|$.
\end{proof}

%--------------------------------------------------------------------------------------------------

\section{PROOF OF \cref{regret_across_meta}}
\label{proof_regret_across_meta}

\RegretAcrossMeta*

\begin{proof}
The first inequality in the statement holds by convexity
of $\ell_{t,i}(\langle x_{t,i}, \cdot \rangle)$ (see \cref{ass1}) and the fact 
$g_{t,i} \in \partial \ell_{t,i}(\langle x_{t,i}, \cdot \rangle)(w_{t,i})$. In order to show the second inequality, we proceed similarly to the proof of \cref{regret_single}, but, we now take into account also the variation of the bias across the iterations. Specifically, by definition of $w_{t,i}$ and $\theta_k$ in \cref{algorithm_free_bias_continuous_setting} and the rewriting of $\theta$ in \cref{parametrization_meta}--\cref{parametrization_meta_2} and $w_t$ in
\cref{parametrization_single_t}--\cref{parametrization_single_t2}, we can write
\begin{equation} \label{initial_decomposition}
\begin{split}
\sum_{t = 1}^T \sum_{i = 1}^n \big \langle g_{t,i}, w_{t,i} - w_t \big \rangle
& = \sum_{t = 1}^T \sum_{i = 1}^n \big \langle g_{t,i}, p_{t,i} v_{t,i} + \theta_{k(t,i)} - ( p_t v_t + \theta) \big \rangle \\
& = \sum_{t = 1}^T \sum_{i = 1}^n \big \langle g_{t,i}, p_{t,i} v_{t,i} - p_t v_t \big \rangle + \big \langle g_{t,i}, \theta_{k(t,i)} - \theta \big \rangle \\
& = \sum_{t = 1}^T \sum_{i = 1}^n p_ {t,i} \big \langle g_{t,i}, v_{t,i} \big \rangle - \big \langle g_{t,i}, p_t v_t \big \rangle + \big \langle g_{t,i}, \theta_{k(t,i)} - \theta \big \rangle \pm p_t \big \langle g_{t,i}, v_{t,i} \big \rangle \\
& = \sum_{t = 1}^T \sum_{i = 1}^n \big \langle g_{t,i}, v_{t,i} \big \rangle \bigl( p_{t,i} - p_t \bigr) + \sum_{t = 1}^T p_t \sum_{i = 1}^n \big \langle g_{t,i}, v_{t,i} - v_t \big \rangle \\
& \quad + \sum_{t = 1}^T \sum_{i = 1}^n \big \langle g_{t,i}, \theta_{k(t,i)} - \theta \big \rangle \\
& = \sum_{t = 1}^T \sum_{i = 1}^n \big \langle g_{t,i}, v_{t,i} \big \rangle \bigl( p_{t,i} - p_t \bigr) + \sum_{t = 1}^T p_t \sum_{i = 1}^n \big \langle g_{t,i}, v_{t,i} - v_t \big \rangle \\
& \quad + \sum_{t = 1}^T \sum_{i = 1}^n \big \langle g_{t,i}, P_{k(t,i)} V_{k(t,i)} - P V \big \rangle \\
& = \sum_{t = 1}^T \sum_{i = 1}^n \big \langle g_{t,i}, v_{t,i} \big \rangle \bigl( p_{t,i} - p_t \bigr) + \sum_{t = 1}^T p_t \sum_{i = 1}^n \big \langle g_{t,i}, v_{t,i} - v_t \big \rangle \\
& \quad + \sum_{t = 1}^T \sum_{i = 1}^n \big \langle g_{t,i}, P_{k(t,i)} V_{k(t,i)} - P V \big \rangle \pm P \langle g_{t,i}, V_{k(t,i)} \big \rangle \\
& = \sum_{t = 1}^T \sum_{i = 1}^n \big \langle g_{t,i}, v_{t,i} \big \rangle \bigl( p_{t,i} - p_t \bigr) + \sum_{t = 1}^T p_t \sum_{i = 1}^n \big \langle g_{t,i}, v_{t,i} - v_t \big \rangle \\
& \quad + \sum_{t = 1}^T \sum_{i = 1}^n \big \langle g_{t,i}, V_{k(t,i)} \big \rangle \bigl( P_{k(t,i)} - P \bigr) 
+ P \sum_{t = 1}^T \sum_{i = 1}^n \big \langle g_{t,i}, V_{k(t,i)} - V \big \rangle.
\end{split}
\end{equation}
In order to get the desired statement, we bound all the four terms above as follows.
Regarding the first term, for any task $t \in \{ 1, \dots, T \}$, we observe that the quantity
$\sum_{i = 1}^n \big \langle g_{t,i}, v_{t,i} \big \rangle \bigl( p_{t,i} - p_t \bigr)$
coincides with the regret of the sequence of scalars $(p_{t,i})_{i = 1}^n$, generated by \cref{coin_betting_alg} aiming at inferring the within-task magnitude $p_t$ from the sequence of scalars $(\langle g_{t,i}, v_{t,i} \rangle)_{i = 1}^n$. We also notice that, since by construction $\| v_{t,i} \| \le 1$ and since the Lipschitz and bounded inputs assumption in \cref{ass1} implies $\| g_{t,i} \| = | s_{t,i} | \| x_{t,i} \| \le L \rx$ (see \cite[Lemma $14.7$]{shalev2014understanding}), then we have
\begin{equation}
\big | \langle g_{t,i}, v_{t,i} \rangle \big | \le \| g_{t,i} \| \| v_{t,i} \| \le \rx L.
\end{equation}
As a consequence, recalling the initial within-task wealth $e  > 0$ of the algorithm, by
\cref{coin_betting_alg_regret}, we have, 
\begin{equation} \label{first_term}
\sum_{i = 1}^n \big \langle g_{t,i}, v_{t,i} \big \rangle \bigl( p_{t,i} - p_t \bigr)
 \le \rx L \Big [ e + \Phi \big ( e^{-1} p_t n \big ) p_t \sqrt{n} \Big ].
\end{equation}
Regarding the second term, for any task $t \in \{ 1, \dots, T \}$, we observe that the quantity $\sum_{i = 1}^n \big \langle g_{t,i}, v_{t,i} - v_t \big \rangle$
coincides with the regret of the sequence $(v_{t,i})_{i = 1}^n$, generated by 
\cref{proj_sub_alg} on $\B(0,1)$ aiming at inferring the within-task direction $v_t$ 
from the sequence of vectors $(g_{t,i})_{i = 1}^n$, where, as observed above,
by Lipschitz and bounded inputs assumption in \cref{ass1}, we have
$\| g_{t,i} \| \le \rx L$. As a consequence, by \cref{proj_sub_alg_regret}, we have, 
\begin{equation} \label{second_term}
\sum_{i = 1}^n \big \langle g_{t,i}, v_{t,i} - v_t \big \rangle
\le \rx L 2 \sqrt{2} \sqrt{n}.
\end{equation}
We now observe that the third term
coincides with the regret of the sequence of scalars $(P_{k(t,i)})_{t = 1, i = 1}^{T,n}$, generated by \cref{coin_betting_alg} aiming at inferring the meta-magnitude $P$ from the sequence of scalars $(\langle g_{t,i}, V_{k(t,i)} \rangle)_{t =1, i = 1}^{T,n}$.
We also notice that, since by construction $\| V_{k(t,i)} \| \le 1$ and since 
the Lipschitz and bounded inputs assumption in \cref{ass1} implies $\| g_{t,i} \| 
= | s_{t,i} | \| x_{t,i} \| \le L \rx$ (see \cite[Lemma $14.7$]{shalev2014understanding}), 
then we have
\begin{equation}
\big | \langle g_{t,i}, V_{k(t,i)} \rangle \big | 
\le \| g_{t,i} \| \| V_{k(t,i)} \| \le \rx L.
\end{equation}
As a consequence, recalling the initial meta-wealth $E  > 0$ of the algorithm, 
by \cref{coin_betting_alg_regret}, we have, 
\begin{equation} \label{third_term}
\sum_{t = 1}^T \sum_{i = 1}^n \big \langle g_{t,i}, V_{k(t,i)} \big \rangle \bigl( P_{k(t,i)} - P \bigr) 
\le \rx L \Big [ E + \Phi \big ( E^{-1} P nT \big ) P \sqrt{n T} \Big ].
\end{equation}
Finally, we observe that the fourth term
coincides with the regret of the sequence $(V_{k(t,i)})_{t = 1, i = 1}^{T,n}$, generated by \cref{proj_sub_alg} on $\B(0,1)$ aiming at inferring the meta-direction $V$ 
from the sequence of vectors $(g_{t,i})_{t = 1, i = 1}^{T,n}$, where, as observed above, by Lipschitz and bounded inputs assumption in \cref{ass1}, $\| g_{t,i} \| \le \rx L$.
As a consequence, by \cref{proj_sub_alg_regret}, we have, 
\begin{equation} \label{fourth_term}
\sum_{t = 1}^T \sum_{i = 1}^n \big \langle g_{t,i}, V_{k(t,i)} - V \big \rangle
\le \rx L 2 \sqrt{2} \sqrt{n T}.
\end{equation}
Substituting \cref{first_term}, \cref{second_term}, \cref{third_term} and 
\cref{fourth_term} into \cref{initial_decomposition}, we get the desired statement,
once one recalls that $p_t = \| w_t - \theta \|$ and $P = \| \theta \|$.
\end{proof}

%--------------------------------------------------------------------------------------------------------

\section{Proof of \cref{online_to_batch}}
\label{online_to_batch_proof}

\OnlineToBatch*

\begin{proof}
During the proof we write explicitly the expectation
\begin{equation}
\Exp_{\bf Z} = \Exp_{{\Zn_1} \sim \task_1^n, \dots, {\Zn_T} \sim \task_T^n}.
\end{equation}
By definition of 
$\E_{{\rm MTL}} \big ( (\bar{w}_t)_{t = 1}^T \big)$
in \cref{MTL_risk}, we have that
\begin{equation} \label{first_a}
\begin{split}
\Exp_{{\Zn_1} \sim \task_1^n, \dots, {\Zn_T} \sim \task_T^n}~
\E_{{\rm MTL}} \big ( (\bar{w}_t)_{t = 1}^T \big) 
& = \Exp_{{\Zn_1} \sim \task_1^n, \dots, {\Zn_T} \sim \task_T^n}~\frac{1}{T} \sum_{t = 1} \cR_{\task_t}(\bar w_t) \\
& \le \Exp_{{\Zn_1} \sim \task_1^n, \dots, {\Zn_T} \sim \task_T^n}~\frac{1}{nT} \sum_{t = 1}^T \sum_{i = 1}^n \cR_{\task_t}(w_{t,i}) \\
%& = \Exp_{{\Zn_1} \sim \task_1^n, \dots, {\Zn_T} \sim \task_T^n}~\frac{1}{nT} \sum_{t = 1}^T \sum_{i = 1}^n \cR_{\task_t}(w_{t,i}) \\
& = \Exp_{{\Zn_1} \sim \task_1^n, \dots, {\Zn_T} \sim \task_T^n}~ \frac{1}{nT} \sum_{t = 1}^T \sum_{i = 1}^n \Exp_{(x,y) \sim \task_t}\ell_y(\langle x, w_{t,i} \rangle ) \\
& = \frac{1}{nT} \sum_{t = 1}^T \sum_{i = 1}^n \Exp_{{\Zn_1} \sim \task_1^n, \dots, {\Zn_{t-1}} \sim \task_{t-1}^n}~\Exp_{{(\hspace{-.032truecm}}z_{t,j}{\hspace{-.032truecm})}_{\hspace{-.02truecm}j{=}1}^{i{-}1} \sim \task_t^{i-1}}~\Exp_{z \sim \task_t}\ell_y(\langle x, w_{t,i} \rangle ) \\
& = \frac{1}{nT} \sum_{t = 1}^T \sum_{i = 1}^n \Exp_{{\Zn_1} \sim \task_1^n, \dots, {\Zn_T} \sim \task_{t-1}^n}~\Exp_{{(\hspace{-.032truecm}}z_{t,j}{\hspace{-.032truecm})}_{\hspace{-.02truecm}j{=}1}^{i} \sim \task_t^{i}}~\ell_{t,i}(\langle x_{t,i}, w_{t,i} \rangle ) \\
& = \Exp_{{\Zn_1} \sim \task_1^n, \dots, {\Zn_T} \sim \task_T^n}~ 
\frac{1}{nT} \sum_{t = 1}^T \sum_{i = 1}^n 
\ell_{t,i} ( \langle x_{t,i}, w_{t,i}\rangle),
\end{split}
\end{equation}
where, the first inequality follows by convexity of the function 
$\cR_{\task_t}$, in the third equality we have exploited the fact that $w_{t,i}$ depends only on the datasets $Z_1,\dots,Z_{t-1}$ and the first $i-1$ points of the $t$-th dataset, $(z_{t,j})_{j {=} 1}^{i{-}1}$, finally, in the fourth equality, since $z_{t,i} \sim \task_t$, we have used the identity
\begin{equation}
\Exp_{{(\hspace{-.032truecm}}z_{t,j}{\hspace{-.032truecm})}_{\hspace{-.02truecm}j{=}1}^{i{-}1} \sim \task_t^{i-1}}~\Exp_{(x,y) \sim \task_t}\ell_y(\langle w_{t,i}, x \rangle )
= \Exp_{{(\hspace{-.032truecm}}z_{t,j}{\hspace{-.032truecm})}_{\hspace{-.02truecm}j{=}1}^{i} \sim \task_t^{i}}~\ell_{t,i}(\langle w_{t,i}, x_{t,i} \rangle ).
\end{equation}
Next, because of the i.i.d. sampling of the data, we observe that we can write 
\begin{equation} \label{second_a}
\Exp_{{\Zn_1} \sim \task_1^n, \dots, {\Zn_T} \sim \task_T^n}~
\frac{1}{nT} \sum_{t = 1}^T \sum_{i = 1}^n 
\ell_{t,i} ( \langle x_{t,i}, w_{\task_t} \rangle)
= \Exp_{{\Zn_1} \sim \task_1^n, \dots, {\Zn_T} \sim \task_T^n}~
\frac{1}{T} \sum_{t = 1}^T \cR_{\task_t}(w_{\task_t}).
\end{equation}
The desired statement now follows from combining \cref{first_a}
and \cref{second_a}.
\end{proof}

As we already observed in the main body, looking at the proof above, 
we notice that the online-to-batch statement in \cref{online_to_batch}
applies also to the `lazy' version of our method reported in \cref{algorithm_free_bias_continuous_setting_lazy} in \cref{lazy_version}.
This allows us to convert also this lazy variant into a statistical (sub-optimal) 
multi-task learning method with a slower rate.

%--------------------------------------------------------------------------------------------------------

\section{LAZY VERSION OF OUR METHOD}
\label{lazy_version}

In this section, we present and analyze the lazy variant of our method 
introduced in the main body. Specifically, after introducing the lazy method
in \cref{algorithm_free_bias_continuous_setting_lazy}, we give an 
across-tasks regret bound for it in \cref{regret_section_lazy}. After 
that, in \cref{section_stat_meta_lazy}, we show how the method 
can be converted into a parameter-free statistical meta-learning method 
and we give a transfer risk bound for it.

\begin{algorithm}[t]
\caption{{\fontsize{9pt}{10pt}\selectfont Parameter-Free Algorithm with Bias Inferred from Data, Lazy Version}} \label{algorithm_free_bias_continuous_setting_lazy}
{\fontsize{9pt}{10pt}\selectfont
\begin{algorithmic}
\State
\State {\bfseries Input} ${\bf \Zn} = (\Zn_t)_{t = 1}^T$, $\Zn_t = (z_{t,i})_{i = 1}^n = (x_{t,i}, y_{t,i})_{i = 1}^n$, $e > 0$, $E > 0$, $L$ and $\rx$ as in \cref{ass1}
%${\bf \Zn} = (\Zn_t)_{t = 1}^T$, $\Zn_t = (z_{t,i})_{i = 1}^n = (x_{t,i}, y_{t,i})_{i = 1}^n$ datasets, $e > 0$ and $E > 0$ initial wealths, $L > 0$ Lipschitz constant of $\ell_{t,i}(\cdot)$ and $\rx > 0$ such that $\| x_{t,i} \| \le \rx$ for any $i = 1, \dots, n$ and $t = 1, \dots, T$
\vspace{.1cm}
\State {\bfseries Initialize} 
$B_{1} = 0$, $U_{1} = E$, 
$P_{1} = B_{1} U_{1}$, 
$V_{1} = 0 \in \B(0,1)$
%$B_{k(1,1)} = 0 \in \Real$ meta-magnitude's betting fraction, 
%$U_{k(1,1)} = E$ meta-magnitude's wealth, 
%$P_{k(1,1)} = B_{k(1,1)} U_{k(1,1)}$ meta-magnitude, 
%$V_{k(1,1)} = 0 \in \B(0,1)$ meta-direction
%%$\theta_{k(1,1)} = P_{k(1,1)} V_{k(1,1)}$ global meta-vector (bias)
\vspace{.1cm}
\State {\bfseries For} $t = 1, \dots, T$
\vspace{.1cm}
\State ~ 1. Meta-vector update $\theta_{t} = P_{t} V_{t}$
\vspace{.1cm}
\State ~ Set $b_{\theta_t,1} = 0$, $u_{\theta_t,1} = e$,
$p_{\theta_t,1} = b_{\theta_t,1} u_{\theta_t,1}$, $v_{\theta_t,1} = 0 \in \B(0,1)$
%\State ~ Define $b_{t,1} = 0 \in \Real$ within-task magnitude's betting fraction, 
%$u_{t,1} = e$ within-task magnitude's wealth, 
%$p_{t,1} = b_{t,1} u_{t,1}$ within-task magnitude, 
%$v_{t,1} = 0 \in \B(0,1)$ within-task direction
\vspace{.1cm}
\State ~ {\bfseries For} $i = 1, \dots, n$
\vspace{.1cm}
\State ~~ 0. Define $k = k(t,i) = (t-1)n + i$
\vspace{.1cm}
%\State ~~ \ds{$\#~1. \text{(META)}$ Updating the global meta-vector $\theta$}
%\vspace{.1cm}
%\State ~~ 1. Meta-vector update $\theta_{k} = P_{k} V_{k}$
%\vspace{.1cm}
%\State ~~ \ds{$\#~1. \text{(WITHIN)}$ Updating the global within-task vector $w$}
%\vspace{.1cm}
\State ~~ 1. Within-vector update $w_{\theta_t,i} = p_{\theta_t,i} v_{\theta_t,i} + \theta_{t}$
\vspace{.1cm}
%\State ~~ \ds{$\#~2. \text{(META $\&$ WITHIN)}$ Receiving the new data-point and computing the new subgradient}
%\vspace{.1cm}
\State ~~ 2a. Receive the datapoint $z_{t,i} = (x_{t,i}, y_{t,i})$
\vspace{.1cm}
\State ~~ 2b. Compute $g_{t,i} = s_{t,i} x_{t,i}$, $s_{t,i} \in \partial \ell_{t,i} ( \langle x_{t,i}, w_{\theta_t,i} \rangle)$
%\vspace{.2cm}
%\State ~~ \ds{$\#~3. \text{(META)}$ Updating the meta-direction $V$}
%\vspace{.1cm}
%\State ~~ 3a. Define $\eta_{k} = \sqrt{\frac{2}{L \rx k}}$
%\vspace{.1cm}
%\State ~~ 3b. Define $V_{k+1} = \proj_{\B(0,1)}\bigl( V_{k} - \eta_{k} g_{t,i} \bigr)$
%\vspace{.2cm}
%\State ~~ \ds{$\#~3. \text{(WITHIN)}$ Updating the within-task direction $v$}
\vspace{.1cm}
\State ~~ 3a. Define $\gamma_{t,i} = \frac{1}{L \rx } \sqrt{\frac{2}{i}}$
\vspace{.1cm}
\State ~~ 3b. Update $v_{\theta_t,i+1} = \proj_{\B(0,1)}\bigl( v_{\theta_t,i} - \gamma_{t,i} g_{t,i} \bigr)$
%\vspace{.2cm}
%\State ~~ \ds{$\#~4. \text{(META)}$ Updating the meta-magnitude $P$}
\vspace{.1cm}
%\State ~~ 4a. Define $U_{k+1}
%%= E - \frac{1}{\rx L} \sum_{m = 1}^t \sum_{j = 1}^{i} \langle g_{m,j}, V_{k(m,j)} \rangle P_{k(m,j)}
%= U_{k} - \frac{1}{\rx L} \langle g_{t,i}, V_{k} \rangle P_{k}$
%\vspace{.1cm}
%\State ~~ 4b. Define $B_{k+1} 
%%= - \frac{1}{\rx L k(t,i)} \sum_{m = 1}^t \sum_{j = 1}^{i} \langle g_{m,j}, V_{k(m,j)} \rangle$
%%\State ~~~~~~ 
%%~~~~~~~~~~~~~~~~~~~~~~~~~~~~~~~~~~~~
%%~~~~~~~~~~~~~~~~~~
%= - \frac{1}{k} \big ( ( k-1 ) B_{k} + \frac{1}{\rx L} \langle g_{t,i}, V_{k} \rangle \big )$ 
%\vspace{.1cm}
%\State ~~ 4c. Update $P_{k+1} = B_{k+1} U_{k+1}$
%\vspace{.2cm}
%\State ~~ \ds{$\#~4. \text{(WITHIN)}$ Updating the within-task magnitude $p$}
\vspace{.1cm}
\State ~~ 4a. Define $u_{\theta_t,i+1}
%= e - \frac{1}{\rx L} \sum_{j = 1}^i \langle g_{t,j}, v_{t,j} \rangle p_{t,j}
= u_{\theta_t,i} - \frac{1}{\rx L} \langle g_{t,i}, v_{\theta_t,i} \rangle p_{\theta_t,i}$
\vspace{.1cm}
\State ~~ 4b. Define $b_{\theta_t,i+1} 
%= - \frac{1}{\rx L i} \sum_{j = 1}^i \langle g_{t,j}, v_{t,j} \rangle
= \frac{1}{i} \big ( (i-1) b_{\theta_t,i} - \frac{1}{\rx L} \langle g_{t,i}, v_{\theta_t,i} \rangle \big )$ 
\vspace{.1cm}
\State ~~ 4c. Update $p_{\theta_t,i+1} = b_{\theta_t,i+1} u_{\theta_t,i+1}$
\vspace{.1cm}
\State ~ {\bfseries End}
\vspace{.1cm}
\State ~ 3A. Define $\eta_{t} = \frac{1}{L \rx n} \sqrt{\frac{2}{t}}$ and $G_t = \sum_{i = 1}^n g_{t,i}$
\vspace{.1cm}
\State ~ 3B. Define $V_{t+1} = \proj_{\B(0,1)}\bigl( V_{t} - \eta_{t} G_t \bigr)$
\vspace{.1cm}
\State ~ 4A. Define $U_{t+1}
= U_{t} - \frac{1}{\rx L n} \langle G_t, V_{t} \rangle P_{t}$
\vspace{.1cm}
\State ~ 4B. Define $B_{t+1} 
= \frac{1}{t} \big ( ( t-1 ) B_{t} - \frac{1}{\rx L n} \langle G_t, V_{t} \rangle \big )$ 
\vspace{.1cm}
\State ~ 4C. Update $P_{t+1} = B_{t+1} U_{t+1}$
\vspace{.1cm}
\State {\bfseries End}
\vspace{.1cm}
\State {\bfseries Return} $(w_{\theta_t,i})_{t = 1, i = 1}^{T,n}$ and $(\theta_{t})_{t =1}^{T}$
\end{algorithmic}}
\end{algorithm}

%-------------------------------------------------------------------------------------------------------

\subsection{ACROSS-TASKS REGRET BOUND}
\label{regret_section_lazy}

\cref{algorithm_free_bias_continuous_setting_lazy} contains 
the lazy version of \cref{algorithm_free_bias_continuous_setting}.
As already stressed in the main body, we call \cref{algorithm_free_bias_continuous_setting_lazy} `lazy' since,
differently from \cref{algorithm_free_bias_continuous_setting}, the
bias is updated only at the end of the task.
In the following theorem, we give an across-tasks regret bound 
for \cref{algorithm_free_bias_continuous_setting_lazy}.

\begin{restatable}[Across-Tasks Regret Bound for \cref{algorithm_free_bias_continuous_setting_lazy}]{theorem}{RegretAcrossMetaLazy}
\label{regret_across_meta_lazy}
Let \cref{ass1} hold.
Consider $T$ datasets ${\bf \Zn} = (\Zn_t)_{t = 1}^T$, $\Zn_t = (z_{t,i})_{i = 1}^n = (x_{t,i}, y_{t,i})_{i = 1}^n$ deriving from $T$ different tasks. 
Let $(w_{\theta_t,i})_{t = 1, i = 1}^{T,n}$ be the iterates generated by \cref{algorithm_free_bias_continuous_setting_lazy} over these
datasets ${\bf \Zn}$. Then, for any $w_t \in \Real^d$ and 
$\theta \in \Real^d$, 
\begin{equation}
\sum_{t = 1}^T \sum_{i = 1}^n \ell_{t,i}( \langle x_{t,i}, w_{\theta_t,i} \rangle)
- \ell_{t,i}( \langle x_{t,i}, w_t \rangle) 
\le \sum_{t = 1}^T \sum_{i = 1}^n \big \langle g_{t,i}, w_{\theta_t,i} - w_t \big \rangle \le \text{A} + \text{B},
\end{equation}
where, A is the term 
%in \cref{termA} 
in \cref{termA_erased},
\begin{equation}
\text{B} = \rx L \Bigg[ E n + \Bigg ( 2 \sqrt{2} + \Phi \Big( E^{-1} \| \theta \| T \Big) \Bigg) \| \theta \| n \sqrt{T} \Bigg ]
\end{equation}
and $\Phi(\cdot)$ is defined as in \cref{coin_betting_alg_regret}.
\end{restatable}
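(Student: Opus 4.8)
The plan is to follow the proof of \cref{regret_across_meta} essentially verbatim, isolating the single point at which the lazy update of the bias changes the bookkeeping. The first inequality in the statement is immediate from convexity of $\ell_{t,i}(\langle x_{t,i},\cdot\rangle)$ (see \cref{ass1}) together with $g_{t,i}\in\partial\ell_{t,i}(\langle x_{t,i},\cdot\rangle)(w_{\theta_t,i})$, so it suffices to bound the linear regret $\sum_{t,i}\langle g_{t,i},w_{\theta_t,i}-w_t\rangle$.

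First I would carry out the magnitude--direction decomposition. Writing $w_{\theta_t,i}=p_{\theta_t,i}v_{\theta_t,i}+\theta_t$ by construction, $w_t=p_tv_t+\theta$ as in \cref{parametrization_single_t}--\cref{parametrization_single_t2}, and $\theta_t=P_tV_t$, $\theta=PV$ as in \cref{parametrization_meta}--\cref{parametrization_meta_2}, I would split the linear regret into a within-task part $\sum_t\sum_i\langle g_{t,i},p_{\theta_t,i}v_{\theta_t,i}-p_tv_t\rangle$ and a bias-deviation part $\sum_t\sum_i\langle g_{t,i},\theta_t-\theta\rangle$. In the within-task part, inserting $\pm\,p_t\langle g_{t,i},v_{\theta_t,i}\rangle$ produces, for each $t$, the within-task magnitude regret $\sum_i\langle g_{t,i},v_{\theta_t,i}\rangle(p_{\theta_t,i}-p_t)$ of the scalars generated by \cref{coin_betting_alg} and the within-task direction regret $p_t\sum_i\langle g_{t,i},v_{\theta_t,i}-v_t\rangle$ of the vectors generated by \cref{proj_sub_alg} on $\B(0,1)$ --- exactly as in \cref{algorithm_free_bias} run with the frozen bias $\theta_t$. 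Bounding these by \cref{coin_betting_alg_regret} and \cref{proj_sub_alg_regret} (using $|\langle g_{t,i},v_{\theta_t,i}\rangle|\le\rx L$ and $\|g_{t,i}\|\le\rx L$ from \cref{ass1}), summing over $t$, and recalling $p_t=\|w_t-\theta\|$ and the definitions of ${\rm Var}(\theta)$ and $\widehat{{\rm Var}}(\theta)$, reproduces term $A$ exactly as in \cref{regret_across}.

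The crux is the bias-deviation part, and this is where laziness is used. Since $\theta_t$ does not depend on the inner index $i$, the sum collapses to $\sum_{t=1}^T\langle G_t,\theta_t-\theta\rangle$ with $G_t=\sum_{i=1}^n g_{t,i}$. Writing $\theta_t=P_tV_t$, $\theta=PV$ and inserting $\pm\,P\langle G_t,V_t\rangle$ splits this into the meta-magnitude regret $\sum_t\langle G_t,V_t\rangle(P_t-P)$ of the scalars $(P_t)_{t=1}^T$ generated by \cref{coin_betting_alg} fed with $(\langle G_t,V_t\rangle)_{t=1}^T$, plus $P$ times the meta-direction regret $\sum_t\langle G_t,V_t-V\rangle$ of the vectors $(V_t)_{t=1}^T$ generated by \cref{proj_sub_alg} on $\B(0,1)$ fed with $(G_t)_{t=1}^T$. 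The only new estimate needed is $\|G_t\|\le n\rx L$ from the triangle inequality and $\|g_{t,i}\|\le\rx L$, whence also $|\langle G_t,V_t\rangle|\le n\rx L$; these match the $\tfrac{1}{\rx L n}$ normalization and the step size $\eta_t=\tfrac{1}{\rx L n}\sqrt{2/t}$ hard-coded in \cref{algorithm_free_bias_continuous_setting_lazy}. Applying \cref{coin_betting_alg_regret} with $C=n\rx L$, $K=T$, initial wealth $E$, and \cref{proj_sub_alg_regret} with $C=n\rx L$, ${\rm diam}(\B(0,1))=2$, $K=T$, and recalling $P=\|\theta\|$, yields precisely term $B$. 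Substituting the two bounds back gives the claim.

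I expect no genuine obstacle: the proof is a direct transcription of that of \cref{regret_across_meta}, the single conceptual point being that the lazy update lets the bias-deviation term telescope over the $n$ inner steps into one meta-step with gradient $G_t$ of norm at most $n\rx L$ --- which is exactly why $B$ now scales as $n\sqrt{T}$ instead of $\sqrt{nT}$. The only thing to be careful about is matching the constants $C=n\rx L$ and $K=T$ in the meta-level invocations of \cref{coin_betting_alg_regret} and \cref{proj_sub_alg_regret} with the $\tfrac{1}{\rx L n}$ normalizations and step sizes appearing in \cref{algorithm_free_bias_continuous_setting_lazy}.
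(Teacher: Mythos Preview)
Your proposal is correct and follows essentially the same route as the paper's proof: the same magnitude--direction decomposition into four regret terms, the same bounds via \cref{coin_betting_alg_regret} and \cref{proj_sub_alg_regret} for the within-task pieces (yielding term $A$), and the same collapse of the bias-deviation part to $\sum_t\langle G_t,\theta_t-\theta\rangle$ with $\|G_t\|\le n\rx L$ before applying the two propositions at the meta-level with $C=n\rx L$ and $K=T$ (yielding term $B$). Your identification of the constants and of the lazy collapse as the sole point of departure from \cref{regret_across_meta} matches the paper exactly.
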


\begin{proof}
The proof exactly proceeds as the proof of \cref{regret_across_meta}.
The first inequality in the statement holds by convexity
of $\ell_{t,i}(\langle x_{t,i}, \cdot \rangle)$ (see \cref{ass1}) and the fact that, by construction, $g_{t,i} \in \partial \ell_{t,i}(\langle x_{t,i}, \cdot \rangle)(w_{\theta_t,i})$. We now proceed
with the proof of the second inequality. Specifically, by definition of $w_{\theta_t,i}$ and 
$\theta_t$ in \cref{algorithm_free_bias_continuous_setting_lazy}, the rewriting of $\theta$ in \cref{parametrization_meta}--\cref{parametrization_meta_2} and $w_t$ in
\cref{parametrization_single_t}--\cref{parametrization_single_t2},
proceeding in the same way as done in the proof of \cref{regret_across_meta}, 
one can show that the linear regret can be bounded by four contributions as follows:
\begin{equation} \label{initial_decomposition_lazy}
\begin{split}
\sum_{t = 1}^T \sum_{i = 1}^n \big \langle g_{t,i}, w_{\theta_t,i} - w_t \big\rangle 
& \le \sum_{t = 1}^T \sum_{i = 1}^n \big \langle g_{t,i}, v_{\theta_t,i} \big \rangle \bigl( p_{\theta_t,i} - p_t \bigr) + \sum_{t = 1}^T p_t \sum_{i = 1}^n \big \langle g_{t,i}, v_{\theta_t,i} - v_{t} \big \rangle \\
& \quad + \sum_{t = 1}^T \big \langle G_t, V_t \big \rangle \bigl( P_t - P \bigr) 
+ P \sum_{t = 1}^T \big \langle G_t, V_t - V \big \rangle.
\end{split}
\end{equation}
In order to get the desired statement, we bound all the four terms above as follows.
The first two terms can be bounded as described in the proof of \cref{regret_across_meta} in \cref{first_term} and \cref{second_term}. We now observe that the third term
coincides with the regret of the sequence of scalars $(P_t)_{t = 1}^{T}$, generated by \cref{coin_betting_alg} aiming at inferring the meta-magnitude $P$ from the sequence of scalars $(\langle G_t, V_t \rangle)_{t =1}^{T}$.
We also notice that, since by construction $\| V_t \| \le 1$ and since 
the Lipschitz and bounded inputs assumption in \cref{ass1} implies $\| g_{t,i} \| 
= | s_{t,i} | \| x_{t,i} \| \le L \rx$ (see \cite[Lemma $14.7$]{shalev2014understanding}), 
then we have
\begin{equation}
\big | \langle G_t, V_t \rangle \big | \le 
\| G_t \| \| V_t \| = 
\Bigg \| \sum_{i = 1}^n g_{t,i} \Bigg \| \| V_t \| 
\le \sum_{i = 1}^n \| g_{t,i} \| \| V_t \|  \le \rx L n.
\end{equation}
As a consequence, recalling the initial meta-wealth $E  > 0$ of the algorithm, 
by \cref{coin_betting_alg_regret}, we have, 
\begin{equation} \label{third_term_lazy}
\sum_{t = 1}^T \big \langle G_t, V_t \big \rangle \bigl( P_t - P \bigr) 
\le \rx L \Big [ E n + \Phi \big ( E^{-1} P T \big ) P n \sqrt{T} \Big ].
\end{equation}
Finally, we observe that the fourth term
coincides with the regret of the sequence $(V_t)_{t = 1}^{T}$, generated by \cref{proj_sub_alg} on $\B(0,1)$ aiming at inferring the meta-direction $V$ 
from the sequence of vectors $(G_t)_{t = 1}^{T}$, where, as observed above, by Lipschitz and bounded inputs assumption in \cref{ass1}, $\| G_t \| \le \rx L n$.
As a consequence, by \cref{proj_sub_alg_regret}, we have, 
\begin{equation} \label{fourth_term_lazy}
\sum_{t = 1}^T \big \langle G_t, V_t - V \big \rangle
\le \rx L 2 \sqrt{2} n\sqrt{T}.
\end{equation}
Substituting \cref{first_term}, \cref{second_term}, \cref{third_term_lazy} and 
\cref{fourth_term_lazy} into \cref{initial_decomposition_lazy}, we get the desired statement,
once one recalls that $p_t = \| w_t - \theta \|$ and $P = \| \theta \|$.
\end{proof}

We observe that the bound above is equivalent to the bound for the method presented in \cite{denevi2019online}, which requires, however, oracle tuning of two hyper-parameters. We observe also that the term A in the bound above is exactly equivalent to the term A in \cref{regret_across_meta} for the aggressive version of the algorithm. However, as expected, in this lazy version, the term B is slower: it goes as 
$\mathcal{O}(n\sqrt{T})$, instead of the $\mathcal{O}(\sqrt{n T})$ rate 
in \cref{regret_across_meta} for the aggressive variant.

%--------------------------------------------------------------------------------------------------------

\subsection{STATISTICAL META-LEARNING SETTING}
\label{section_stat_meta_lazy}

In this section we show how we can convert \cref{algorithm_free_bias_continuous_setting_lazy} into a parameter-free
statistical meta-learning algorithm and we present guarantees for it. 
We consider the statistical meta-learning framework described in \cite{baxter2000model,maurer2005algorithmic,maurer2016benefit}.
More precisely, we assume that, for any $t \in \{1, \dots, T \}$, the 
within-task dataset $\data_t$ is an independently identically distributed 
(i.i.d.) sample from a distribution (task) $\task_t$, and in turn the tasks 
$(\task_t)_{t = 1}^T$ are an i.i.d. sample from a meta-distribution (or
\emph{environment}) $\env$. 
Differently from the multi-task learning setting described in the main body, 
in the meta-learning setting here, we want to select an estimator which is
able to generalize also across the tasks.  

In this section, we will make explicit the dependency w.r.t. the dataset 
and the bias in the iteration generated by \cref{algorithm_free_bias}.
The estimator we consider here is $\bar{w}_{\theta_{\hat t}}(\Zn) = \frac{1}{n} \sum_{i = 1}^n w_{\theta_{\hat t},i}(\Zn)$, the average of the iterates resulting from applying \cref{algorithm_free_bias} to a test dataset $\Zn$ with bias $\theta_{\hat t}$, a vector uniformly sampled among the bias vectors returned by our meta-algorithm in \cref{algorithm_free_bias_continuous_setting_lazy} applied to the training datasets ${\bf Z} = (\Zn_t)_{t = 1}^T$. In this case, we want to study the performance of such an 
estimator in expectation w.r.t. the tasks sampled from the environment $\env$. 

Formally, for any $\task \sim \env$, we require that the corresponding true risk $\cR_\task(w) = \Exp_{(x,y) \sim \task} \ell ( \langle x, w \rangle, y )$ admits minimizers over the entire space $\Real^d$ and we denote by $\wmu$ the minimum norm one.
With these ingredients, we introduce the {\rm meta-learning oracle} $\ee_{{\rm META}}^* = \EE_{\task \sim \env}~\cR_\task(\wmu)$
and, introducing the \emph{transfer risk} of the estimator $\bar{w}_{\theta_{\hat t}}$:
\begin{equation} \label{transfer_risk_lazy}
\E_{{\rm META}} ( \bar{w}_{\theta_{\hat t}} ) = \Exp_{\task \sim \env}~\Exp_{\Zn \sim \task^n}~ \cR_\task ( \bar{w}_{\theta_{\hat t}} (\Zn) ),
\end{equation}
we give a bound on it w.r.t. the oracle $\ee_{{\rm META}}^*$.  This is described in the following theorem.

\begin{restatable}[Transfer Risk Bound for \cref{algorithm_free_bias_continuous_setting_lazy}]{theoremshortref}{ExcessRiskBoundLazy} 
\label{excess_risk_bound_lazy}
Let the same assumptions in \cref{regret_across_meta_lazy}
hold in the i.i.d. meta-learning statistical setting. Let $\E_{{\rm META}} ( \bar{w}_{\theta_{\hat t}} )$ as in \cref{transfer_risk_lazy}, namely,
the transfer risk of the average $\bar w_{\theta_{\hat t}}$ of the iterates 
generated by \cref{algorithm_free_bias} with bias $\theta_{\hat t}$ uniformly 
sampled among the bias vectors returned by \cref{algorithm_free_bias_continuous_setting_lazy} applied to the training 
datasets ${\bf Z} = (\Zn_t)_{t = 1}^T$. Then, for any 
$\theta \in \Real^d$, in expectation w.r.t. the sampling of the datasets 
${\bf Z}$ and the uniform sampling of $\hat t \sim \U(T)$,
\begin{equation}
\Exp_{\hat t \sim \U(T)}~\Exp_{\bf Z}~ \E_{{\rm META}} ( \bar{w}_{\theta_{\hat t}} ) - \ee_{{\rm META}}^* \le \frac{1}{n T} \bigl( \text{A} + \text{B} \bigr)
\end{equation}
where
\begin{equation} \label{termA_stat}
\text{A} = \rx L \Bigg [ e T + \Big (2 \sqrt{2} {\rm Var}_{{\rm META}}(\theta) + \widehat {\rm Var}_{{\rm META}}(\theta) \Big ) \sqrt{n} T \Bigg ],
\end{equation}
\begin{equation}
{\rm Var}_{{\rm META}}(\theta) 
= \Exp_{\task \sim \env} \| w_{\task} - \theta \|,
\end{equation}
\begin{equation}
\widehat {\rm Var}_{{\rm META}}(\theta) 
= \Exp_{\task \sim \env} \Big [ \Phi \Big( e^{-1} \| w_{\task} - \theta \| n \Big) \| w_{\task} - \theta \| \Big ],
\end{equation}
B is the term in \cref{regret_across_meta_lazy}
and $\Phi(\cdot)$ is defined as in \cref{coin_betting_alg_regret}.
\end{restatable}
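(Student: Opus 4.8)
The argument combines a within-task online-to-batch conversion, a distributional identity that exploits the i.i.d.\ sampling of the tasks, and the across-tasks regret bound of \cref{regret_across_meta_lazy}. First I would fix the data-independent competitor $\theta \in \Real^d$ and work conditionally on the bias vectors $(\theta_t)_{t=1}^T$ produced by \cref{algorithm_free_bias_continuous_setting_lazy} on the training datasets ${\bf Z}$. The key structural remark is that, in the lazy method, $\theta_t$ is a function of $Z_1,\dots,Z_{t-1}$ only (it is updated at the end of task $t-1$ through $G_{t-1}$), hence $\theta_t$ is independent of $Z_t$ and of $\task_t$; moreover, within task $t$, \cref{algorithm_free_bias_continuous_setting_lazy} runs exactly \cref{algorithm_free_bias} on $Z_t$ with the frozen bias $\theta_t$.

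Next, for a generic fixed bias $\theta' \in \Real^d$ and a generic task $\task$ with test dataset $\Zn \sim \task^n$, the within-task online-to-batch conversion (Jensen's inequality applied to $\cR_\task$, together with the fact that the $i$-th iterate $w_{\theta',i}(\Zn)$ of \cref{algorithm_free_bias} depends only on the first $i-1$ points of $\Zn$, exactly as in the proof of \cref{online_to_batch} restricted to a single task) gives
\begin{equation*}
\Exp_{\Zn \sim \task^n} \cR_\task\bigl(\bar w_{\theta'}(\Zn)\bigr) \;\le\; \frac{1}{n}\, \Exp_{\Zn \sim \task^n} \sum_{i=1}^n \ell_i\bigl(\langle x_i, w_{\theta',i}(\Zn)\rangle\bigr).
\end{equation*}
Then I would set $\theta' = \theta_{\hat t}$ and apply the distributional identity: since $\theta_{\hat t}$ depends only on $Z_1,\dots,Z_{\hat t-1}$ and the tasks are i.i.d.\ from $\env$, the triple formed by $\theta_{\hat t}$, a fresh test task $\task \sim \env$, and $\Zn \sim \task^n$ has the same law as the triple formed by $\theta_{\hat t}$, the training task $\task_{\hat t}$, and $Z_{\hat t}$; under the latter, the within-task run of \cref{algorithm_free_bias} on $\Zn$ with bias $\theta_{\hat t}$ is precisely the within-task run that the meta-algorithm already performs on task $\hat t$. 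Taking $\Exp_{\hat t \sim \U(T)}$, which produces the average $\tfrac1T\sum_{t=1}^T$, yields
\begin{equation*}
\Exp_{\hat t \sim \U(T)} \Exp_{\bf Z}\, \E_{{\rm META}}(\bar w_{\theta_{\hat t}}) \;\le\; \Exp_{\bf Z}\; \frac{1}{nT}\sum_{t=1}^T\sum_{i=1}^n \ell_{t,i}\bigl(\langle x_{t,i}, w_{\theta_t,i}\rangle\bigr),
\end{equation*}
with $w_{\theta_t,i}$ the within-task iterates of \cref{algorithm_free_bias_continuous_setting_lazy}. Symmetrically, since $\wmu$ is deterministic given $\task$ and each $z_i$ is an i.i.d.\ draw from $\task$, one has $\ee_{{\rm META}}^* = \Exp_{\bf Z}\, \tfrac{1}{nT}\sum_{t=1}^T\sum_{i=1}^n \ell_{t,i}(\langle x_{t,i}, w_{\task_t}\rangle)$.

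Subtracting the last two displays, the right-hand side becomes $\tfrac1{nT}$ times the expectation over ${\bf Z}$ of the across-tasks regret of \cref{algorithm_free_bias_continuous_setting_lazy} against the competitor sequence $w_t = w_{\task_t}$ and the fixed bias $\theta$. I would then bound this regret by $\text{A}+\text{B}$ via \cref{regret_across_meta_lazy}, where $\text{B}$ is data-independent and $\text{A}$ is the term in \cref{termA_erased} evaluated with $w_t = w_{\task_t}$. Finally, taking $\Exp_{\bf Z}$ and using that the tasks $\task_t$ are i.i.d.\ from $\env$ turns the empirical averages $\tfrac1T\sum_t \|w_{\task_t}-\theta\|$ and $\tfrac1T\sum_t \Phi(e^{-1}\|w_{\task_t}-\theta\| n)\|w_{\task_t}-\theta\|$ into ${\rm Var}_{{\rm META}}(\theta)$ and $\widehat{\rm Var}_{{\rm META}}(\theta)$ respectively, reproducing $\text{A}$ exactly as in \cref{termA_stat} and giving the claimed bound $\tfrac1{nT}(\text{A}+\text{B})$.

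The step I expect to be the main obstacle is the distributional identity in the second paragraph: one must be careful that the uniformly sampled index $\hat t$ is drawn independently of everything else, that $\theta_{\hat t}$ is measurable with respect to $Z_1,\dots,Z_{\hat t-1}$ only, and hence that ``a fresh test task together with a uniformly random training bias'' is genuinely equal in distribution to ``training task $\hat t$ together with the bias the lazy meta-algorithm used on it'' — this is exactly what lets the transfer risk be controlled by the training-time across-tasks regret. Everything else (the within-task online-to-batch step and the passage from empirical to population variances) is routine given the results already established, and the same scheme mirrors the proof of \cref{excess_risk_bound} in the multi-task case, with the extra resampling over the test task and the uniform choice of $\hat t$ being the only new ingredients.
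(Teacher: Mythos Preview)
Your proposal is correct and follows essentially the same approach as the paper. The paper packages the online-to-batch step as a separate proposition (\cref{online_to_batch_lazy}), whose proof is exactly your distributional identity plus Jensen's inequality, and then applies \cref{regret_across_meta_lazy} with $w_t = w_{\task_t}$ and takes expectations to pass from empirical to population variances --- precisely the structure you outline.
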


We observe that the bound above is composed by the expectation of the terms comparing in \cref{regret_across_meta_lazy}. This automatically derives from the fact that, as we will see in the following, the proof of the statement exploits the across-tasks regret bound given in \cref{regret_across_meta_lazy} for \cref{algorithm_free_bias_continuous_setting_lazy}
and, as described in the following proposition, online-to-batch conversion arguments \cite{cesa2004generalization,littlestone2014line}.
 
\begin{restatable}[Online-To-Batch Conversion for  \cref{algorithm_free_bias_continuous_setting_lazy}]{proposition}{OnlineToBatchLazy} 
\label{online_to_batch_lazy}
Under the same assumptions in \cref{excess_risk_bound}, the following relation holds
\begin{equation}
\Exp_{\hat t \sim \U(T)}~\Exp_{\bf Z}~ \E_{{\rm META}} ( \bar{w}_{\theta_{\hat t}} ) - \ee_{{\rm META}}^* \le \Exp_{\bf Z}~ \Bigg [ \frac{1}{nT} \sum_{t = 1}^T \sum_{i = 1}^n \ell_{t,i}( \langle x_{t,i}, w_{\theta_t,i} \rangle) - \ell_{t,i}( \langle x_{t,i}, w_{\task_t} \rangle) \Bigg ].
\end{equation}
\end{restatable}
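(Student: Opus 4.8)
The plan is to follow the template of the proof of \cref{online_to_batch}, inserting one extra layer to handle the uniform sampling of $\hat t$ and the outer expectation over the environment $\env$. First I would unfold the left-hand side using the definition of $\E_{{\rm META}}$ in \cref{transfer_risk_lazy} together with $\bar w_{\theta_{\hat t}}(\Zn) = \frac1n\sum_{i=1}^n w_{\theta_{\hat t},i}(\Zn)$, then apply Jensen's inequality and the convexity of $\cR_\task$ (a consequence of \cref{ass1}) to get
\begin{equation*}
\Exp_{\hat t \sim \U(T)}~\Exp_{\bf Z}~\E_{{\rm META}}(\bar w_{\theta_{\hat t}}) \le \Exp_{\hat t \sim \U(T)}~\Exp_{\bf Z}~\Exp_{\task \sim \env}~\Exp_{\Zn \sim \task^n}~\frac{1}{n}\sum_{i=1}^n \cR_\task\big(w_{\theta_{\hat t},i}(\Zn)\big).
\end{equation*}

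The crucial observation is that $\theta_{\hat t}$, being one of the bias vectors produced by \cref{algorithm_free_bias_continuous_setting_lazy}, depends on the training data only through $\Zn_1,\dots,\Zn_{\hat t-1}$, so it is independent both of the fresh test pair $(\task,\Zn)$ and of the $\hat t$-th training pair $(\task_{\hat t},\Zn_{\hat t})$, which are moreover identically distributed. This exchangeability lets me swap the fresh test task for task $\hat t$, replacing the inner $\Exp_{\task\sim\env}\Exp_{\Zn\sim\task^n}\cR_\task(w_{\theta_{\hat t},i}(\Zn))$ by $\cR_{\task_{\hat t}}(w_{\theta_{\hat t},i})$, where $w_{\theta_{\hat t},i}$ is now exactly the within-task iterate that \cref{algorithm_free_bias_continuous_setting_lazy} produces on task $\hat t$. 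I would then apply the usual within-task online-to-batch step: conditionally on $\task_{\hat t}$, on $\Zn_1,\dots,\Zn_{\hat t-1}$ and on the first $i-1$ points of $\Zn_{\hat t}$, the iterate $w_{\theta_{\hat t},i}$ is deterministic while $z_{\hat t,i}$ is a fresh draw from $\task_{\hat t}$, hence $\Exp[\cR_{\task_{\hat t}}(w_{\theta_{\hat t},i})] = \Exp[\ell_{\hat t,i}(\langle x_{\hat t,i}, w_{\theta_{\hat t},i}\rangle)]$. Since $\hat t$ is uniform over $\{1,\dots,T\}$, averaging over $\hat t$ turns the right-hand side into $\Exp_{\bf Z}\frac{1}{nT}\sum_{t=1}^T\sum_{i=1}^n \ell_{t,i}(\langle x_{t,i}, w_{\theta_t,i}\rangle)$.

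It remains to rewrite the oracle term: since $w_{\task_t}$ depends only on $\task_t$ and $z_{t,i}\sim\task_t$, one has $\Exp_{\bf Z}\,\ell_{t,i}(\langle x_{t,i}, w_{\task_t}\rangle) = \Exp_{\task_t\sim\env}\cR_{\task_t}(w_{\task_t}) = \ee_{{\rm META}}^*$ for every $t$ and $i$, so that $\ee_{{\rm META}}^* = \Exp_{\bf Z}\frac{1}{nT}\sum_{t=1}^T\sum_{i=1}^n \ell_{t,i}(\langle x_{t,i}, w_{\task_t}\rangle)$. Subtracting this identity from the previous chain gives exactly the claimed bound. The part I expect to require the most care — and the only genuine difference with \cref{online_to_batch} — is the swap argument of the second paragraph: one must verify precisely the independence structure (that $\theta_{\hat t}$ uses no data from task $\hat t$ or later) and the exchangeability between the test task and task $\hat t$, which is where the i.i.d.\ sampling of the tasks from the environment is essential.
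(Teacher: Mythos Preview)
Your proposal is correct and follows essentially the same approach as the paper: expand $\E_{{\rm META}}$, use Jensen's inequality on the convex risk, exploit that $\theta_{\hat t}$ depends only on $\Zn_1,\dots,\Zn_{\hat t-1}$ to swap the fresh test pair $(\task,\Zn)$ for $(\task_{\hat t},\Zn_{\hat t})$, apply the within-task online-to-batch identity, and rewrite the oracle via i.i.d.\ sampling. The only cosmetic difference is the order of operations --- the paper performs the swap before Jensen, whereas you apply Jensen first and then swap --- but both orders are valid and the key independence/exchangeability observation you highlight is exactly the one the paper uses.
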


\begin{proof}
In the following, we will explicitly write the expectation $\Exp_{\bf Z}$ in 
the statement above as
\begin{equation}
\Exp_{\bf Z} =
\Exp_{\task_1, \dots, \task_T \sim \env^T} ~ \Exp_{{\Zn_1} \sim \task_1^n, \dots, {\Zn_T} \sim \task_T^n}.
\end{equation}
Writing more explicitly the expectation w.r.t. the uniform sampling
$\hat t \sim \U(T)$ and exploiting the definition of $\E_{{\rm META}}
( \bar{w}_{\theta_{\hat t}} )$, we can write the following
\begin{equation} \label{statement_online_to_batch_1}
\begin{split}
& \Exp_{\hat t \sim \U(T)}~\Exp_{\task_1, \dots, \task_T \sim \env^T} ~ \Exp_{{\Zn_1} \sim \task_1^n, \dots, {\Zn_T} \sim \task_T^n}~\E_{{\rm META}} ( \bar{w}_{\theta_{\hat t}} ) \\
& \quad = \Exp_{\hat t \sim \U(T)}~\Exp_{\task_1, \dots, \task_T \sim \env^T} ~ \Exp_{{\Zn_1} \sim \task_1^n, \dots, {\Zn_T} \sim \task_T^n}~\Exp_ {\task \sim \env} ~ \Exp_{\Zn \sim \task^n} ~ \cR_{\task} (\bar w_{\theta_{\hat t}}(\Zn)) \\
& \quad = \frac{1}{T}~\Exp_{\task_1, \dots, \task_T \sim \env^T} ~ \Exp_{{\Zn_1} \sim \task_1^n, \dots, {\Zn_T} \sim \task_T^n}~\sum_{t = 1}^T \Exp_ {\task \sim \env} ~ \Exp_{\Zn \sim \task^n} ~ \cR_{\task} (\bar w_{\theta_t}(\Zn)) \\
& \quad = \frac{1}{T}~\sum_{t = 1}^T \Exp_{\task_1, \dots, \task_{t-1} \sim \env^{t-1}} ~ \Exp_{{\Zn_1} \sim \task_1^n, \dots, {\Zn_{t-1}} \sim \task_{t-1}^n}~\Exp_ {\task_t \sim \env} ~ \Exp_{\Zn_t \sim \task_t^n} ~ \cR_{\task_t} (\bar w_{\theta_t}(\Zn_t)) \\
& \quad \le \frac{1}{n T}~\sum_{t = 1}^T \sum_{i = 1}^n \Exp_{\task_1, \dots, \task_{t-1} \sim \env^{t-1}} ~ \Exp_{{\Zn_1} \sim \task_1^n, \dots, {\Zn_{t-1}} \sim \task_{t-1}^n}~\Exp_ {\task_t \sim \env} ~ \Exp_{\Zn_t \sim \task_t^n} ~ \cR_{\task_t} ( w_{\theta_t,i}(\Zn_t)) \\
& \quad = \Exp_{\task_1, \dots, \task_T \sim \env^T} ~ \Exp_{{\Zn_1} \sim \task_1^n, \dots, {\Zn_T} \sim \task_T^n} ~ \frac{1}{n T} \sum_{t = 1}^T \sum_{i = 1}^n \ell_{t,i} ( \langle x_{t,i}, w_{\theta_t,i}(\Zn_t) \rangle )
\end{split}
\end{equation}
where, in the third equality we have exploited the fact that ${\theta_t}$ depends only on 
$(\Zn_j)_{j = 1}^{t-1}$ and the i.i.d. sampling of the datasets, in the inequality we have applied Jensen's inequality to the convex function $\cR_{\task_t}$ and, finally, in the last equality we have exploited the fact that $w_{\theta_t,i}(\Zn_t)$ depends only on the points $(z_{t,j})_{j = 1}^{i-1}$ and, consequently, thanks to the fact $\Zn_t \sim \task_t^n$,
\begin{equation}
\Exp_{{\Zn_t} \sim \task_t^n} ~ \cR_{\task_t} ( w_{\theta_t,i} (\Zn_t) ) 
= \Exp_{{\Zn_t} \sim \task_t^n} ~ \ell_{t,i} ( \langle 
x_{t,i}, w_{\theta_t,i}(\Zn_t) \rangle).
\end{equation}
We now observe also that, by the i.i.d. sampling of the training data, we can write
the following
\begin{equation} \label{statement_online_to_batch_2}
\ee_{{\rm META}}^* = \Exp_{\task \sim \env} \cR_\task(w_\task) =
\Exp_{\task_1, \dots, \task_T \sim \env^T} ~ \Exp_{{\Zn_1} \sim \task_1^n, \dots, {\Zn_T} \sim \task_T^n} ~ \frac{1}{nT} \sum_{t = 1}^T \sum_{i = 1}^n \ell_{t,i}( \langle x_{t,i}, w_{\task_t} \rangle).
\end{equation}
The desired statement derives from combining \cref{statement_online_to_batch_1}
and \cref{statement_online_to_batch_2}.
\end{proof}

We observe that the online-to-batch conversion above, similarly to \cite[Thm. $6.1$]{alquier2016regret} and \cite[Thm. $3.3$]{balcan2019provable}, holds for a meta-parameter randomly sampled from the pool. In practice, this means that, when the 
number of training tasks $T$ is not known a priori, the method requires keeping in memory the meta-parameters estimated during the training phase in order to perform
this sampling in the test phase. To give guarantees for an estimator which can be 
computed more efficiently by our method, as done in \cite{denevi2019learning,denevi2019online}, is still an open question.
As already pointed out in the main body, we also observe that
we did not manage to develop an online-to-batch conversion
similar to the one above in \cref{online_to_batch_lazy} for the 
aggressive variant of our method in \cref{algorithm_free_bias_continuous_setting}.
In other words, we did not know whether it is possible to convert 
the aggressive variant of our method into a statistical meta-learning 
method able to generalize also to new tasks. This would imply faster 
rates going as $\sqrt{n T}$ for the second term in the bounds also for the 
meta-learning setting.

We now have all the ingredients necessary for the proof of \cref{excess_risk_bound_lazy}.

\begin{proofGD}{\bf of \cref{excess_risk_bound_lazy}.} 
The desired statement derives from applying on the right side of \cref{online_to_batch_lazy}
the across-tasks regret bound in \cref{regret_across_meta_lazy} specified to the 
sequence of target vectors $(w_{\task_t})_{t = 1}^T$.
\end{proofGD}

%--------------------------------------------------------------------------------------------------------

\section{ADDITIONAL EXPERIMENTS}
\label{exp_details}

In this section we report additional experiments investigating 
the sensitivity w.r.t. the initial wealths and the effectiveness over real 
data of our methods. Also in these cases, we considered regression 
settings and we evaluated the errors by the absolute loss. In the plots 
below we reported also the (aggressive and lazy) variants of our 
parameter-free methods analyzed in our theory and using the KT  
algorithm in \cref{coin_betting_alg} to estimate the
magnitudes. We will denote these variants with the subscript `KT' 
to distinguish them from their counterparts estimating the magnitudes 
by the more refined variant of the coin betting algorithm described in 
\cite[Alg. $2$]{cutkosky2018black}.

%------------------------------------------------------------------------------------------------------

\subsection{SENSITIVITY W.R.T. THE INITIAL WEALTHS}

\begin{figure}[t]
\begin{minipage}[t]{0.49\textwidth}  
\centering
\includegraphics[width=.9\textwidth]{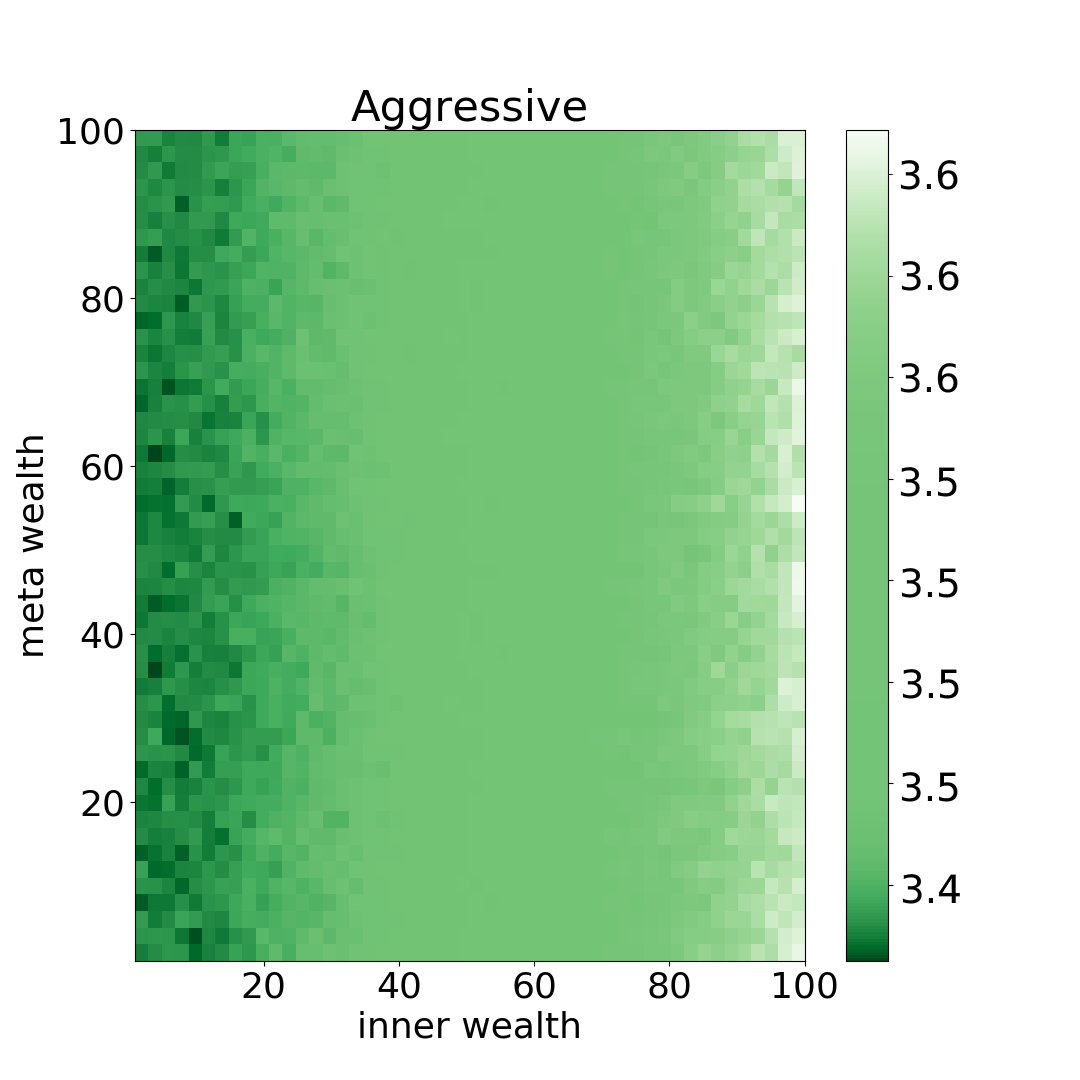}
%\vspace{-.3cm}
\end{minipage}
\begin{minipage}[t]{0.49\textwidth}
\centering
\includegraphics[width=.9\textwidth]{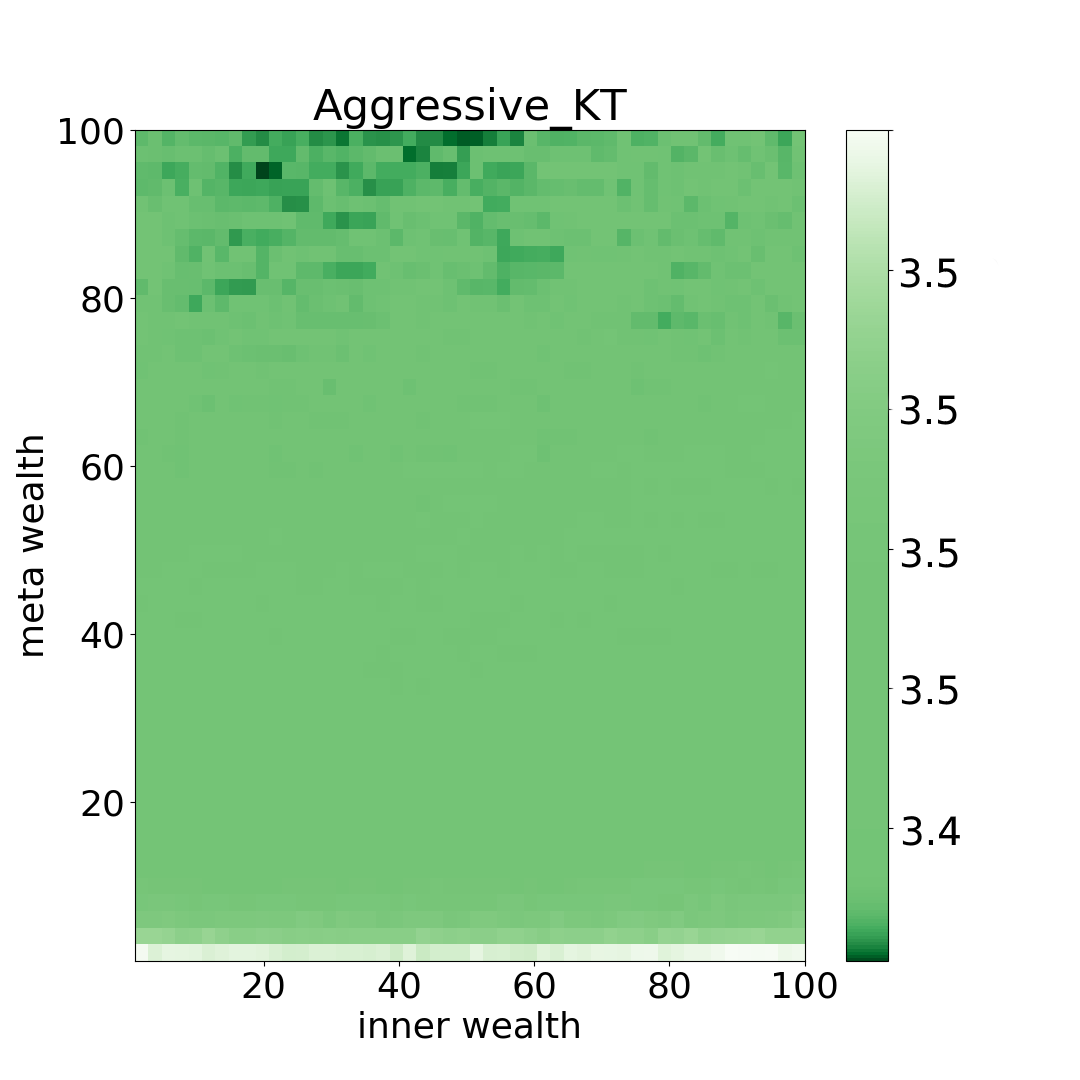}
%\vspace{-.3cm}
\end{minipage}
\caption{Average across-tasks cumulative error (over $30$ seeds) 
of our aggressive method w.r.t. a grid of inner-meta initial
wealths on synthetic data. Variant using refined coin betting (left), 
variant using KT coin betting (right). \label{fig_exps_grid}}
\end{figure}

In order to investigate the sensitivity of our parameter-free methods w.r.t. to the initialization of the wealths, we ran the aggressive variants of our method on the same experimental setting of \cref{fig_exps_real_synth} (top) over a $50 \times 50$ 
linearly spaced grid of (inner and outer/meta) initial wealths in the interval $[0.1, 100]$. 
In \cref{fig_exps_grid} we report the average across-tasks cumulative 
error we got at the end of the entire sequence of tasks for any value in 
the grid. From our results, we can observe that the performance of the method is  
quite stable and not much sensitive w.r.t. the initialization of the wealths. Hence, 
coherently to the single-task setting in \cite{orabona2016coin}, also in our multiple
tasks methods, the choice of the initial wealths has a mild impact on the performance.

%\gd{We also investigated the gap between the convergence rate of the aggressive version of our method and its lazy counterpart, as a function of the within-task points. In \cref{fig:synth-data-class} we report the average cumulative error of the two methods with different number of the within-task points $n$. As we can see, the plot is in line with our theory: the larger is $n$, the largerthe gap is between the two variant.}

%-------------------------------------------------------------------------------------------------------

\subsection{REAL EXPERIMENTS}

\begin{figure}[t]
\begin{minipage}[t]{0.49\textwidth}  
\centering
\includegraphics[width=1\textwidth]{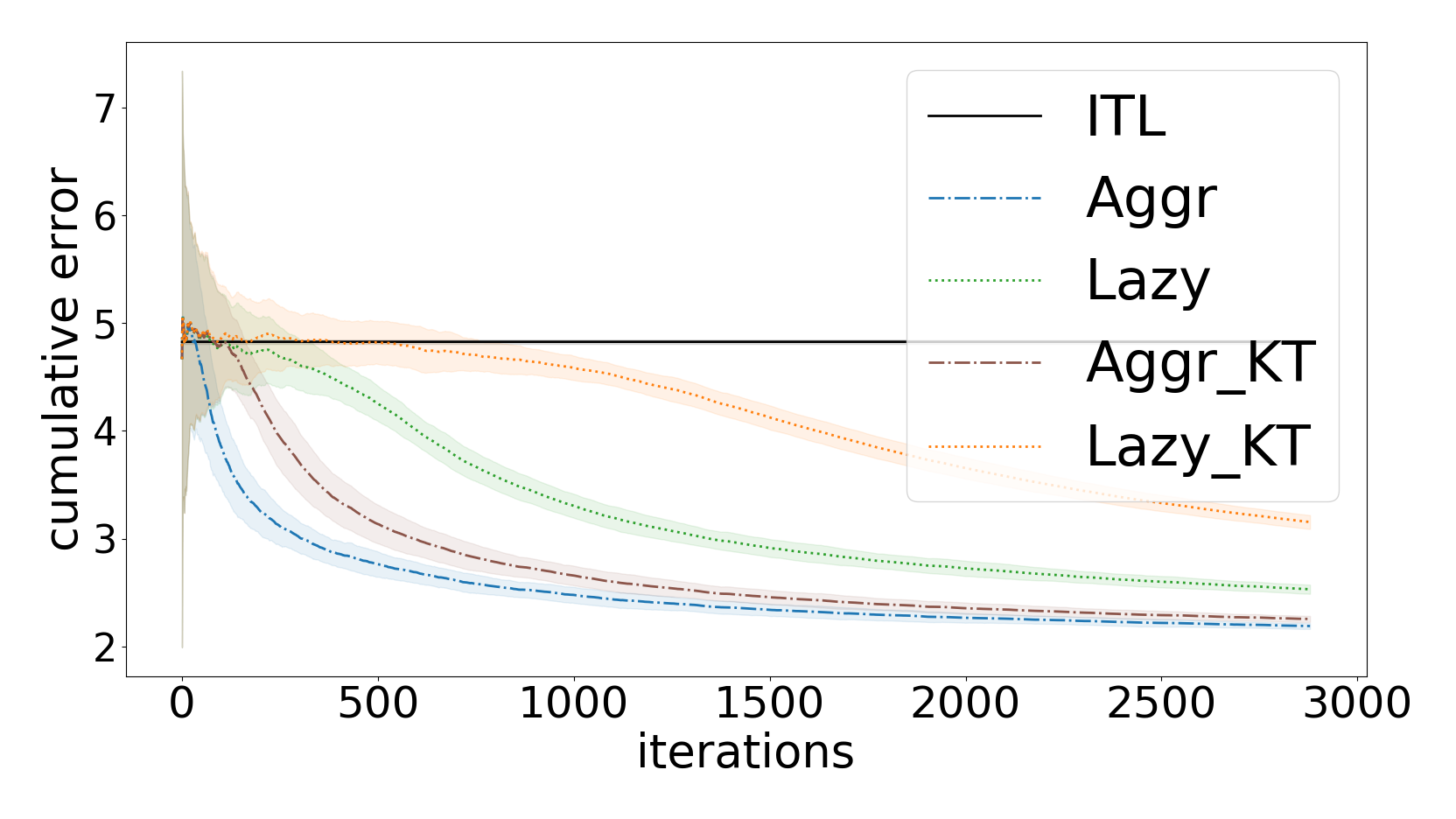}
%\vspace{-.3cm}
\end{minipage}
\begin{minipage}[t]{0.49\textwidth}
\centering
\includegraphics[width=1\textwidth]{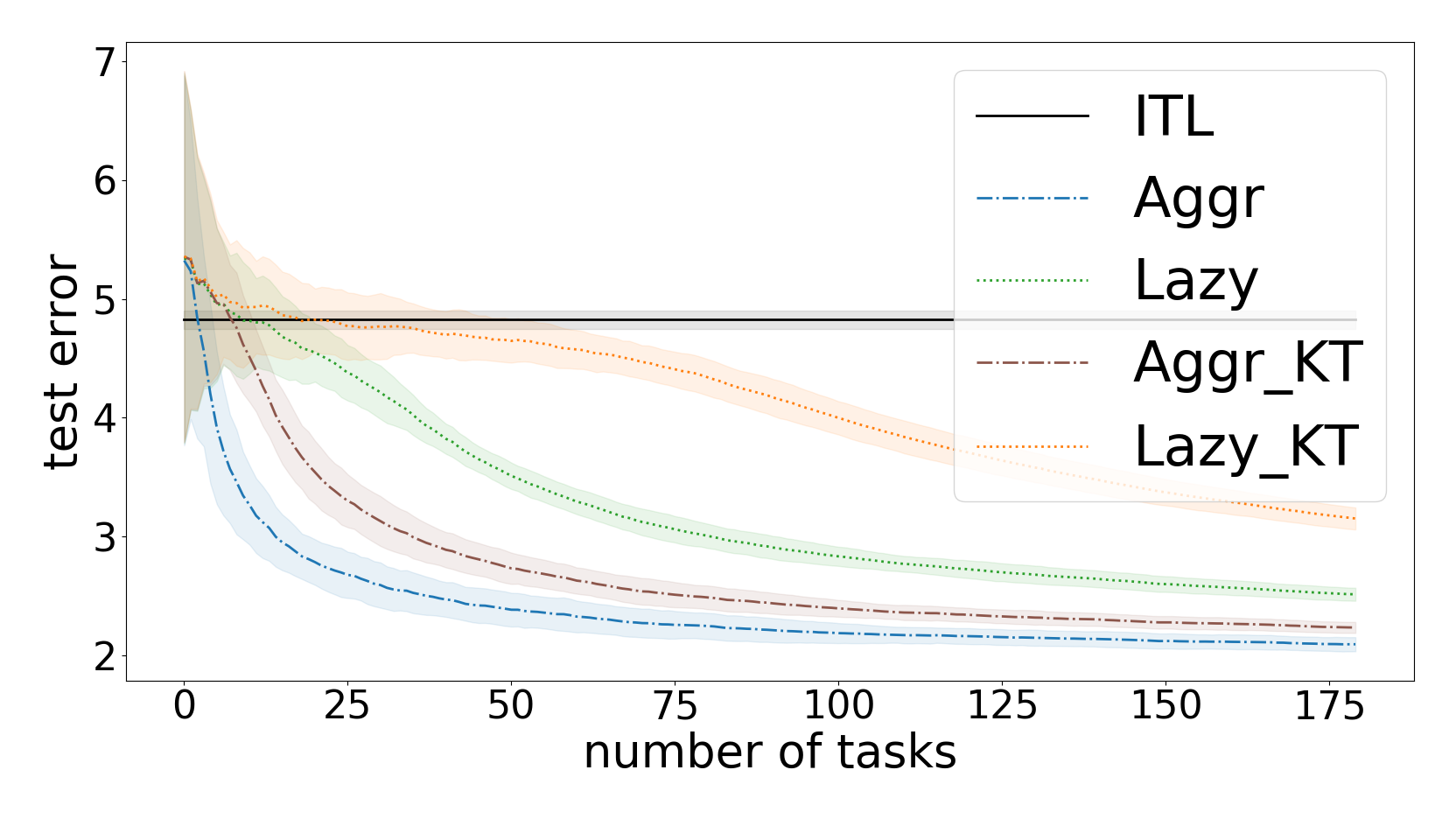}
%\vspace{-.3cm}
\end{minipage}
\caption{Average performance (over $30$ seeds) 
of different methods w.r.t. an increasing number of 
iterations or tasks on the Lenk dataset. Average 
across-tasks cumulative error (left), average multi-task 
test error (right). \label{fig_exps_real_lenk}}
\end{figure}

\begin{figure}[t]
\begin{minipage}[t]{0.49\textwidth}  
\centering
\includegraphics[width=1\textwidth]{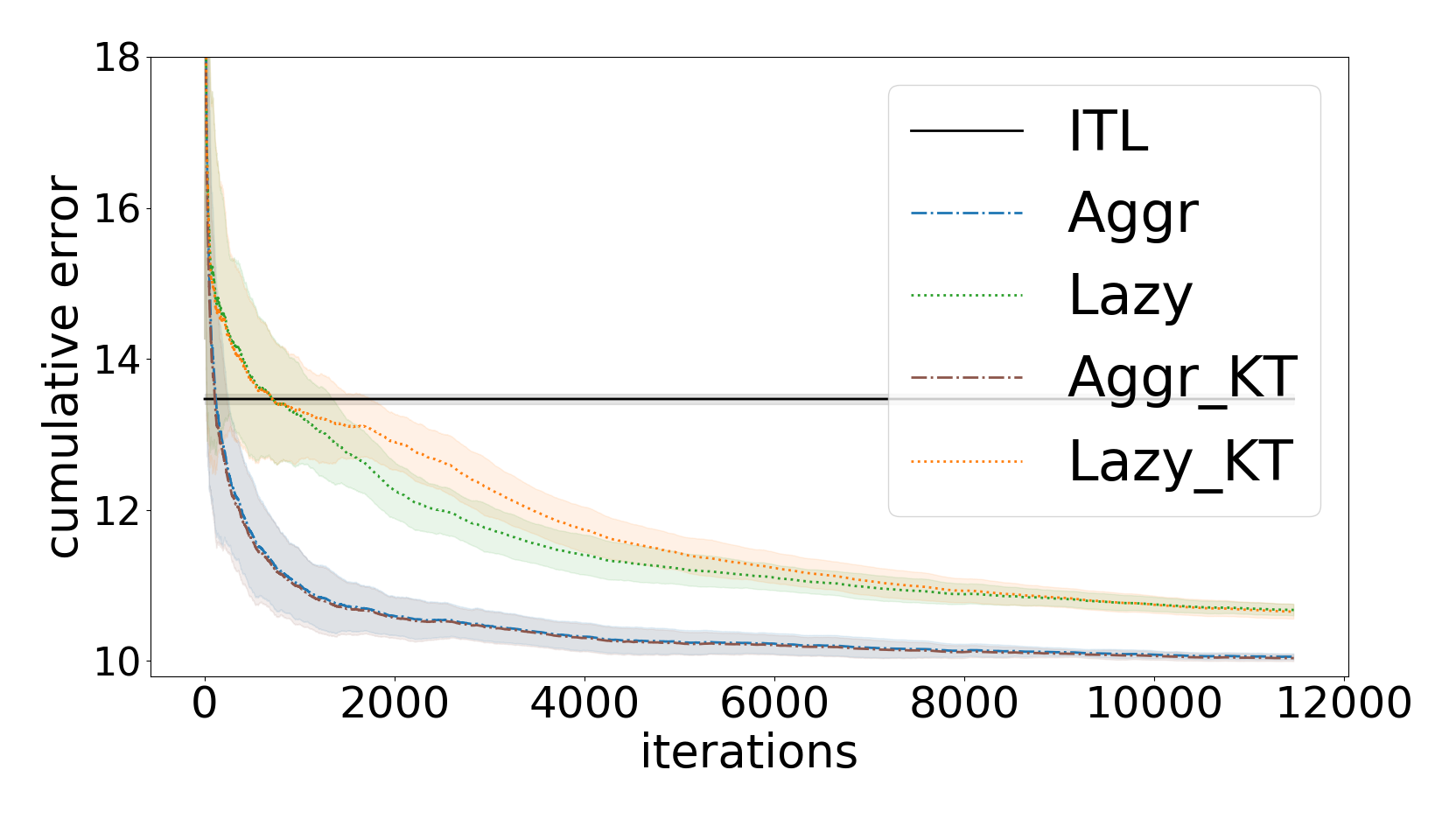}
%\vspace{-.3cm}
\end{minipage}
\begin{minipage}[t]{0.49\textwidth}
\centering
\includegraphics[width=1\textwidth]{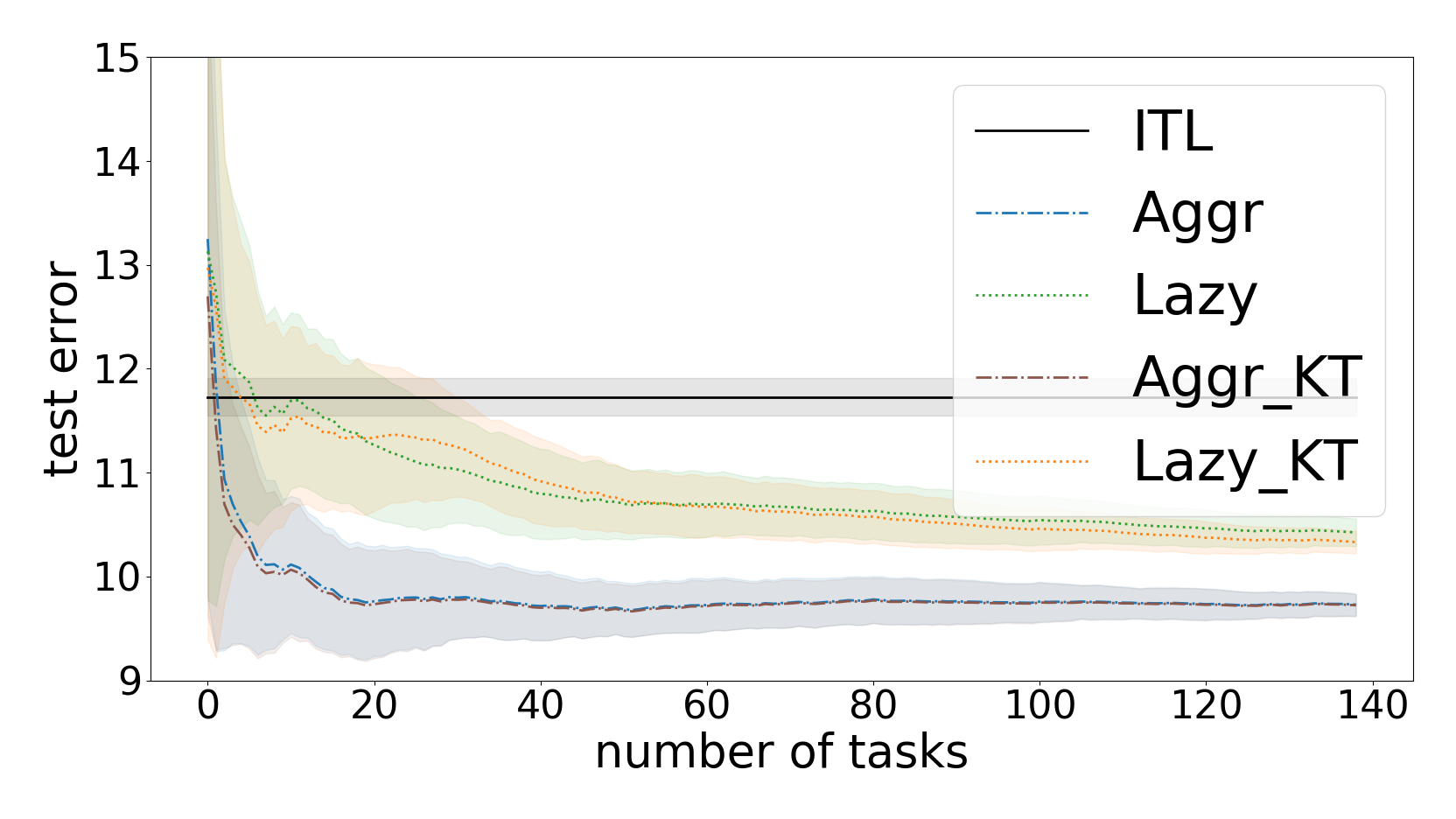}
%\vspace{-.3cm}
\end{minipage}
\caption{Average performance (over $30$ seeds) 
of different methods w.r.t. an increasing number of 
iterations or tasks on the Schools dataset. Average 
across-tasks cumulative error (left), average multi-task 
test error (right). \label{fig_exps_real_schools}}
\end{figure}

We tested the performance of our methods also on two regression 
problems on the Lenk and the Schools datasets. Also in these cases, 
we set the initial wealths in our methods equal to $1$, for both the 
inner and the outer algorithm. In the plots below, we used $80\%$ 
of the available datapoints for each task to train the inner algorithm.
The remaining part was used to compute the test error of the inner 
algorithm in the statistical multi-task setting.

{\bf Lenk dataset.} We considered the computer survey data from \cite{lenk1996hierarchical,Andrew}, in which $T = 180$ people 
(tasks) rated the likelihood of purchasing one of $n = 20$ 
different personal computers. The input represents $d = 13$ different 
computers' characteristics, while the output is an integer rating between 
$0$ and $10$. In \cref{fig_exps_real_lenk} we report the average 
across-tasks cumulative error (left) and the average multi-task test error (right) 
for all the methods w.r.t. to an increasing number of datapoints/iterations
or tasks. The results we got are in agreement with the synthetic experiments 
in the main body. Our parameter-free approaches significantly outperform 
ITL and they converge to the oracle (the algorithm with the best bias 
in hindsight) as the number of the observed datapoints/tasks increases. 
Again, the aggressive variants of our method present faster rates w.r.t. 
the corresponding lazy counterparts. We also observe that, in this setting, 
the KT variants of our parameter-free methods present a slightly slower 
convergence w.r.t. the corresponding refined variants.

{\bf Schools dataset.}
We considered the Schools dataset \cite{argyriou2008convex}, consisting of 
examination records from $T = 139$ schools. Each school is 
associated to a task, individual students are represented by a 
features' vectors $x \in \Real^d$, with $d = 26$, and their exam scores 
to the outputs. The sample size $n$ varies across the 
tasks from a minimum $24$ to a maximum $251$. The results we
got in \cref{fig_exps_real_schools} are coherent to the ones we 
described above for the Lenk dataset and they confirm the effectiveness
of our method also on this dataset. In this case, we observe that, the 
convergence speed of the KT variants of our parameter-free methods 
is equivalent to the one of the corresponding refined variants.

%We conclude this section reporting the characteristics of the machine 
%we used for running our experiments and the complexity of our method 
%in \cref{OGDA2_paper}.

%\gd{All the experiments were conducted on an Intel Xeon E5-2697 V3 2.60Ghz CPU with 32GB RAM.}
%\gd{All the experiments were conducted on a workstation with 4 Intel Xeon E5-2697 V3 2.60Ghz CPUs and 256GB RAM.}

%The variant of our method in \cref{OGDA2_paper} for biased regularization using the batch inner algorithm in \cref{RERM_bias} has a time and space complexity $\mathcal{O}(d (k+n))$. The variant for fine tuning using the online inner algorithm in \cref{online_inner_algorithm} has a time and space complexity $\mathcal{O}(dk)$.

\end{document}